\documentclass{article}

\usepackage{microtype}
\usepackage{graphicx}
\usepackage{subcaption}
\usepackage{booktabs} 
\usepackage{etoc}
\usepackage{hyperref}

\usepackage[accepted]{icml2026}

\usepackage{amsmath}
\usepackage{amssymb}
\usepackage{mathtools}
\usepackage{amsthm}

\usepackage{booktabs}
\usepackage{multirow}
\usepackage[table,xcdraw]{xcolor} 
\usepackage{arydshln}
\usepackage{amsmath}
\usepackage{enumitem}

\usepackage{subcaption}
\definecolor{darkred}{rgb}{0.6,0,0}  

\usepackage[capitalize,noabbrev]{cleveref}
\theoremstyle{plain}
\newtheorem{theorem}{Theorem}[section]
\newtheorem{proposition}[theorem]{Proposition}
\newtheorem{lemma}[theorem]{Lemma}

\theoremstyle{definition}

\theoremstyle{remark}

\usepackage[textsize=tiny]{todonotes}

\icmltitlerunning{ECHO: Entropy-Confidence Hybrid Optimization  for Test-Time Reinforcement Learning}

\begin{document}

\twocolumn[
  \icmltitle{ECHO: Entropy-Confidence Hybrid Optimization  for \\ Test-Time Reinforcement Learning}



  \icmlsetsymbol{equal}{*}

  \begin{icmlauthorlist}
    \icmlauthor{Chu Zhao}{equal,yyy}
    \icmlauthor{Enneng Yang}{equal,comp}
    \icmlauthor{Yuting Liu}{yyy}
    \icmlauthor{Jianzhe Zhao}{yyy}
    \icmlauthor{Guibing Guo}{yyy}
  \end{icmlauthorlist}

  \icmlaffiliation{yyy}{Northeastern University, Shenyang, China}
  \icmlaffiliation{comp}{Shenzhen Campus of Sun Yat-sen University, China}
  \icmlcorrespondingauthor{Guibing Guo and Jianzhe Zhao}{guogb,zhaojz@swc.neu.edu.cn}
  \icmlkeywords{Machine Learning, refineforcement learning}

  \vskip 0.3in
]


\printAffiliationsAndNotice{\icmlEqualContribution}

\begin{abstract}

Test-time reinforcement learning generates multiple candidate answers via repeated rollouts and performs online updates using pseudo-labels constructed by majority voting. To reduce overhead and improve exploration, prior work introduces tree-structured rollouts, which share reasoning prefixes and branch at key nodes to improve sampling efficiency. However, this paradigm still faces two challenges: (1) high-entropy branching can trigger rollout collapse, where the branching budget concentrates on a few trajectories with consecutive high-entropy segments, rapidly reducing the number of effective branches; (2) early pseudo-labels are noisy and biased, which can induce self-reinforcing overfitting, causing the policy to sharpen prematurely and suppress exploration. To address these issues, we propose Entropy–Confidence Hybrid Group Relative Policy Optimization (ECHO). During rollout, ECHO jointly leverages local entropy and group-level confidence to adaptively control branch width, and further introduces online confidence-based pruning to terminate persistently low-confidence branches, avoiding high-entropy traps and mitigating collapse. During policy updates, ECHO employs confidence-adaptive clipping and an entropy–confidence hybrid advantage shaping approach to enhance training robustness and mitigate early-stage bias. Experiments demonstrate that ECHO achieves consistent gains on multiple mathematical and visual reasoning benchmarks, and generalizes more effectively under a limited rollout budget. The implementation code is available at \url{https://github.com/user683/ECHO}
\end{abstract}

\section{Introduction}




Current language models~\cite{liu2024deepseek,bai2023qwen,yang2025qwen3} often rely on high-quality human-labeled data for preference alignment, yet in many scenarios such labels are difficult to obtain or prohibitively expensive. To address this, researchers have proposed Test-Time Reinforcement Learning (TTRL)~\cite{zuo2025ttrl}, which improves reasoning capability by optimizing on self-feedback signals generated during inference, without requiring external supervision. In a typical TTRL setup, the model produces multiple candidate solutions for the same query, constructs pseudo-labels via self-consistency~\cite{zhao2025learning} or majority voting, and performs online updates accordingly. However, this paradigm is highly sensitive to the rollout budget: to obtain sufficiently stable pseudo-labels, conventional chain-based rollouts often require dozens to hundreds of samples, incurring substantial parallel sampling costs. To reduce this overhead and improve exploration, recent works~\cite{liu2025ettrl} introduce tree-search-based rollouts~\cite{ji2025tree,hou2025treerl}, which share reasoning prefixes and branch at key decision points to achieve broader and more efficient sampling coverage under the same token budget, thereby producing more reliable learning signals at lower cost.

Despite the effectiveness of the above approaches, existing TTRL methods still face two key challenges in practice. \textbf{First, high-entropy branching can trigger rollout collapse}. ETMR~\cite{liu2025ettrl} branches at high-entropy nodes, but under majority-vote pseudo-labeling, the branching budget is repeatedly consumed by a small number of trajectories with consecutive high-entropy steps. As a result, the number of effective branches quickly shrinks, the search tree degenerates into an almost chain-like rollout, and pseudo-labels become dominated by homogenized trajectories ultimately reducing exploration coverage and degrading the quality of learning signals. \textbf{Second, early pseudo-label bias leads to self-reinforcing overfitting}. In the early stage of training, pseudo-rewards are noisy and biased, which can push the policy to prematurely converge to a locally “high-scoring” solution and form a feedback loop: the output distribution rapidly sharpens (entropy drops), exploration collapses, and both later-stage performance and generalization are constrained.
Following the existing work~\cite{dong2025agentic}, we conduct two types of statistics over ETMR rollouts: (i) high-entropy continuity, which measures the proportion of consecutive high-entropy segments and their length distribution within each trajectory, to characterize whether high-entropy steps persistently cluster along local paths; and (ii) branch budget allocation, which summarizes how the fixed branching budget is distributed across trajectories (e.g., the number of effective branches per batch and the Top-3 budget share), to quantify budget concentration and identify rollout collapse. The results in Figure~\ref{fig: emp_figures} further validate this phenomenon: ETMR’s branching budget is heavily skewed toward a few trajectories, so most rollouts effectively cover only 1–3 branches, exhibiting clear branching imbalance. The right panel shows that the majority of the budget is dominated by the top-ranked trajectories, while the remaining candidates receive almost no exploration resources, thereby weakening sampling coverage and reducing pseudo-label diversity.

\begin{figure}[!t]
    \centering
    \begin{subfigure}[b]{0.23\textwidth}
        \centering
    \includegraphics[width=\textwidth]{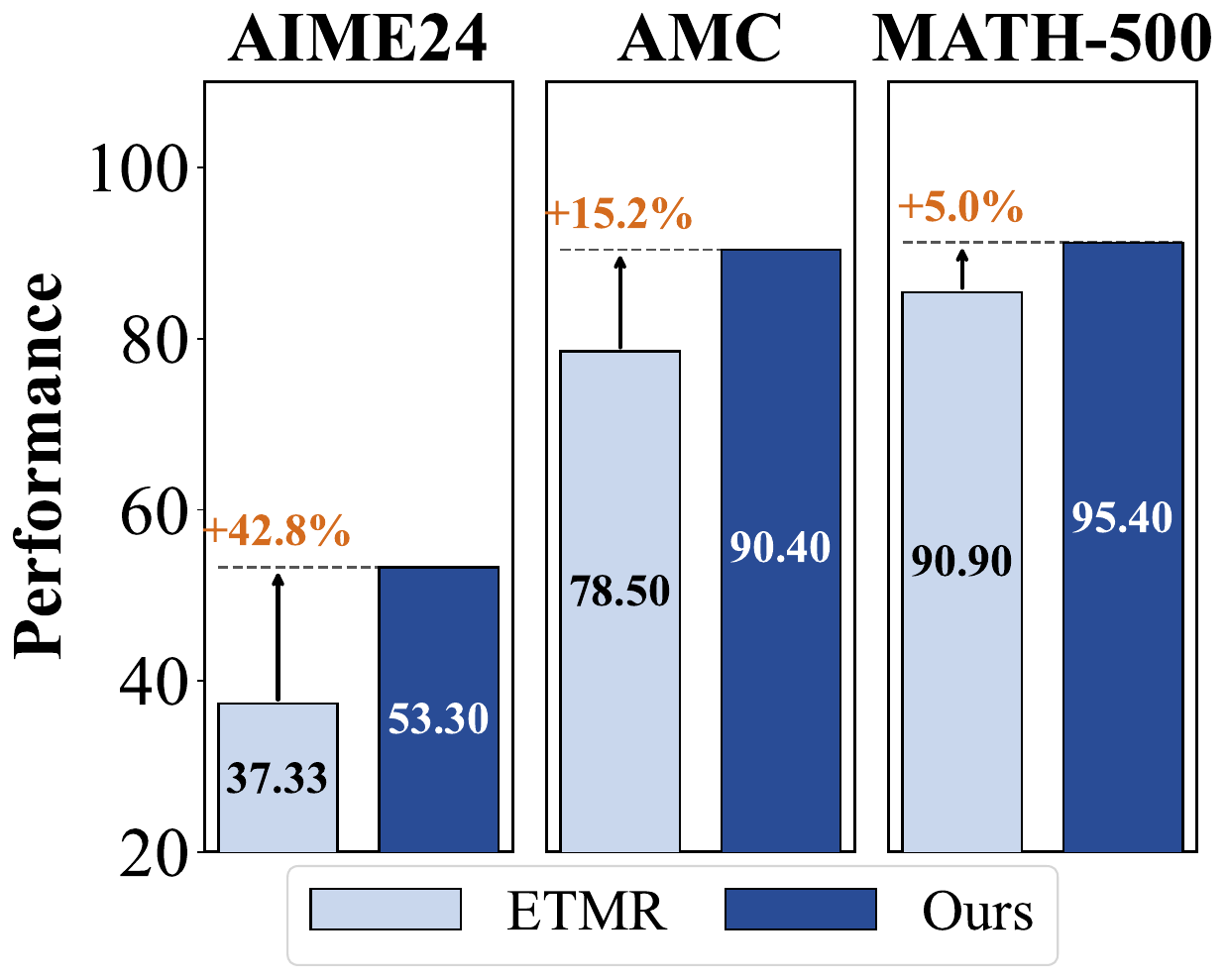}
    \end{subfigure}
    \;
    \begin{subfigure}[b]{0.23\textwidth}
        \centering
        \includegraphics[width=\textwidth]{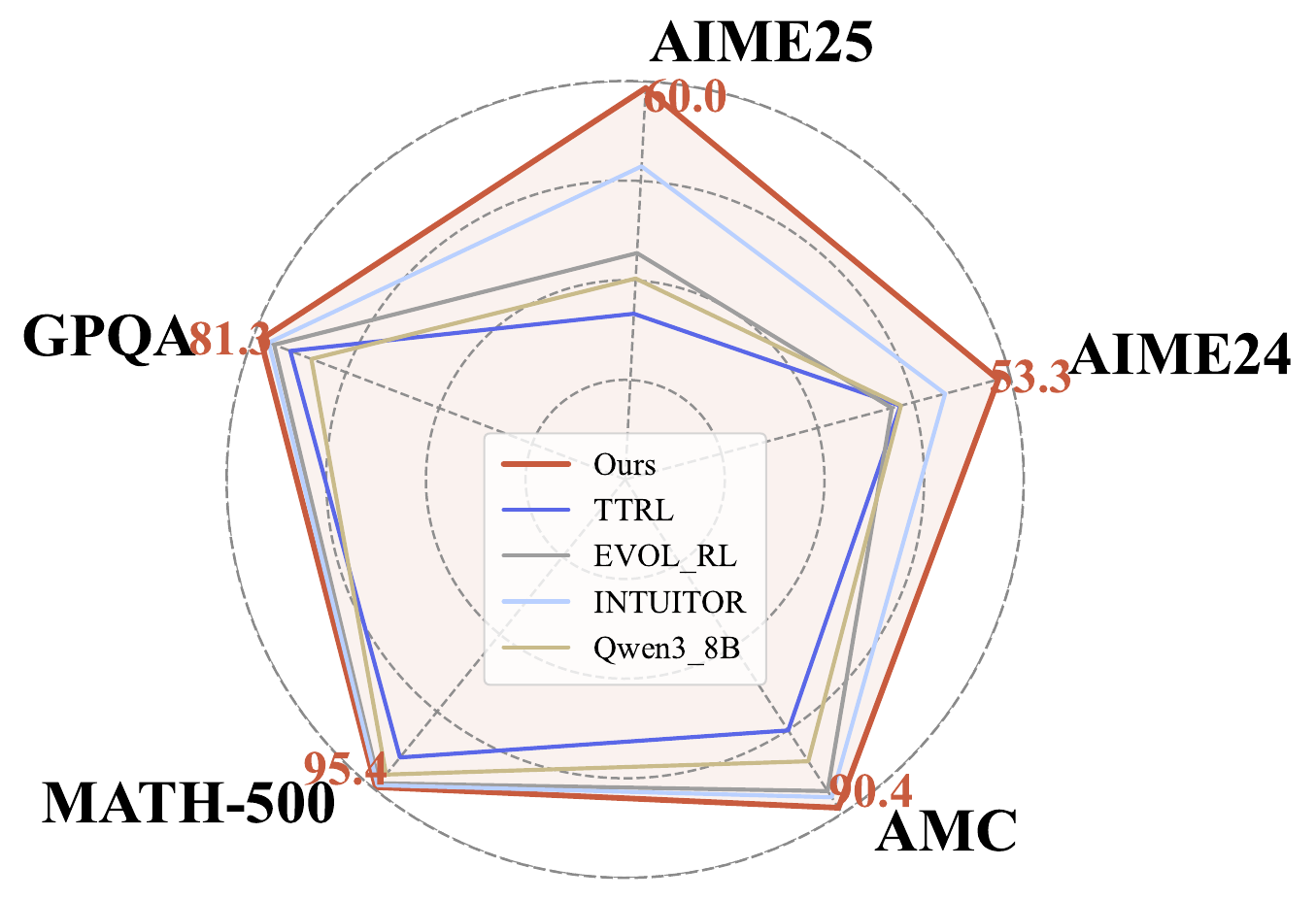}
    \end{subfigure}
    \caption{Performance overview of ECHO algorithm.}
    \label{fig: overall_figures}
    \vspace{-0.5cm}
\end{figure}

To address the above challenges, we propose Entropy and Confidence Hybrid Group Relative Policy Optimization (ECHO). For Challenge 1, we introduce an entropy-confidence hybrid tree search. During node expansion, ECHO jointly models local entropy and group-level confidence, and dynamically balances them with adaptive weights. It shrinks the branching width in high-confidence regions to suppress unproductive expansions, while increasing the branching width in high-entropy and low-confidence regions to strengthen exploration and improve diversity. To prevent the search from being persistently trapped in highly uncertain high-entropy traps, we further incorporate online confidence-based pruning. Specifically, we use the minimum confidence within a sliding window along each trajectory as a quality criterion, and immediately terminate and prune a branch when it falls below an adaptive threshold estimated by warm-up quantiles. This mechanism distinguishes reasonable transient exploration from persistently low-confidence erroneous reasoning chains, thereby mitigating high-entropy collapse and improving both reasoning efficiency and trajectory quality.
For Challenge 2, ECHO applies confidence-adaptive clipping to couple the update magnitude with sample reliability, preventing noisy pseudo-labels from inducing overly aggressive updates. In addition, we introduce entropy and confidence joint advantage shaping, which increases the learning weight on tokens from effective but low-confidence trajectories. This design enhances training robustness and alleviates early-stage bias-induced overfitting and entropy collapse. As shown in Figure~\ref{fig: overall_figures}, ECHO consistently outperforms strong baselines on representative reasoning benchmarks, demonstrating robust gains on both mathematical and general QA tasks. The radar plot further confirms that these improvements are broad and balanced across all evaluated datasets, rather than concentrated on a single benchmark.

Our contributions are summarized as follows: 
\textcolor{darkred}{\textbf{(1)}} We propose Entropy–Confidence Hybrid Group Relative Policy Optimization, which jointly regulates local entropy and group-level confidence during rollout via adaptive weighting, and incorporates online pruning based on sliding-window confidence and warm-up quantile thresholds to suppress persistently low-confidence branches and alleviate rollout collapse. \textcolor{darkred}{\textbf{(2)}} We introduce the ECHO update mechanism, which combines confidence-adaptive clipping with entropy–confidence joint advantage shaping to reduce the impact of noisy pseudo-labels and to strengthen learning from tokens on effective yet low-confidence trajectories, thereby mitigating early-stage bias and entropy collapse while improving robustness and generalization. \textcolor{darkred}{\textbf{(3)}} Extensive experiments on mathematical and visual reasoning benchmarks demonstrate the method’s consistent gains.
\begin{figure}[!t]
    \centering
    \begin{subfigure}[b]{0.23\textwidth}
        \centering
    \includegraphics[width=\textwidth]{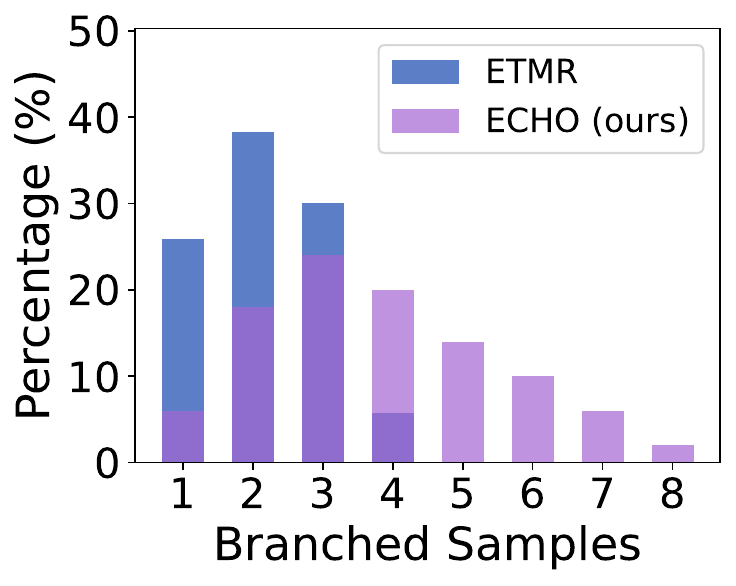}
    \end{subfigure}
    \begin{subfigure}[b]{0.23\textwidth}
        \centering
        \includegraphics[width=\textwidth]{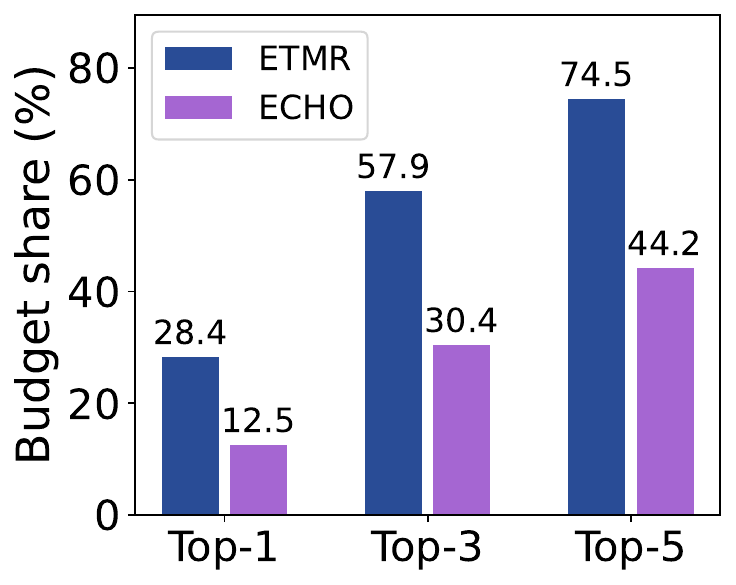}
    \end{subfigure}
    \caption{Empirical study of high-entropy rollout collapse in entropy-driven tree search for TTRL.}
    \label{fig: emp_figures}
    \vspace{-0.5cm}
\end{figure}

\section{Preliminaries}
Recent studies~\cite{zhao2505learning,liu2025ettrl,fu2025deep} have moved away from external annotations and instead exploit model-intrinsic statistics from token probability distributions to score, filter, and guide reasoning trajectories. Two widely used signals are:
\textbf{(1) Token Entropy.}
Given the predicted distribution $P_i$ at position $i$, token entropy is
\begin{equation}
H_i=-\sum_{j} P_i(j)\log P_i(j),
\label{eq: entory}
\end{equation}
where $P_i(j)$ denotes the probability of the $j$-th vocabulary token. Lower $H_i$ indicates a sharper (more certain) distribution.
\textbf{(2) Token Confidence Score.}
Let $\mathrm{Top}k(P_i)$ denote the set of top-$k$ tokens ranked by $P_i(\cdot)$. We define a token-level confidence score as the mean probability mass over these candidates:
\begin{equation}
C_i=\frac{1}{k}\sum_{v\in \mathrm{Top}k(P_i)} P_i(v),
\label{eq: confidence}
\end{equation}
where $k$ is a fixed hyperparameter. Larger $C_i$ indicates higher token-level confidence.
Building on these signals, prior work~\cite{liu2025ettrl} adopts tree-structured rollouts in test-time RL, typically branching at high-entropy positions to improve diversity under a limited budget. However, high entropy-only branching often suffers from \textbf{high-entropy collapse}, where rollout budget over-concentrates on persistently uncertain regions, resulting in imbalanced exploration and degraded trajectory quality. We address this by an \textbf{entropy--confidence jointly optimized} tree-search framework that co-regulates uncertainty and confidence for branch triggering and budget allocation.

\begin{figure*}[ht]
  \begin{center}
    \centerline{\includegraphics[width=0.95\textwidth]{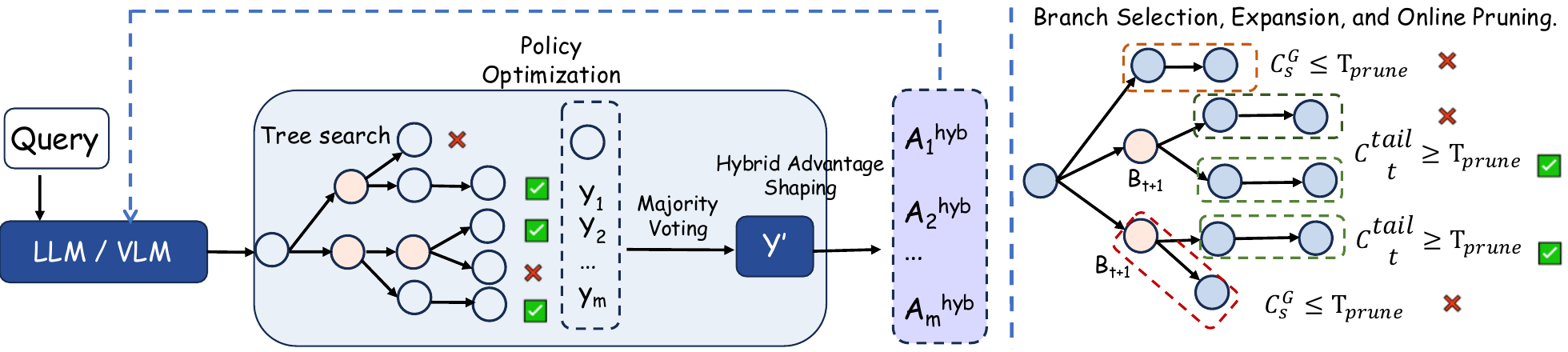}}
    \caption{
Overview of our proposed framework.
Given a query, the LLM/VLM performs entropy–confidence guided tree-search rollouts with adaptive branching and confidence-based online pruning. The surviving trajectories produce candidate answers $y_i$, which are aggregated by majority voting to obtain $y'$. We then compute hybrid shaped advantages $A_i^{\mathrm{hyb}}$ for policy optimization, while pruning branches whose grouped confidence falls below $\tau_{\text{prune}}$ to save budget and prevent high-entropy collapse.
}
\label{fig: framework}
  \end{center}
  \vspace{-0.9cm}
\end{figure*}

\section{Methodology}
This section details ECHO. Section 3.1 presents an entropy--confidence guided tree-structured rollout, Section 3.2 introduces confidence-adaptive clipping for stable updates under noisy pseudo-rewards, and Section 3.3 proposes entropy--confidence hybrid advantage shaping to emphasize uncertain yet decision-critical tokens. Figure~\ref{fig: framework} overviews the framework.

\subsection{Tree-Structured Rollout with Entropy--Confidence Hybrid Optimization} In this section, we introduce an entropy--confidence guided tree-structured rollout. It uses token-level entropy and confidence, with window-based smoothing, to adaptively decide where to branch and how many branches to allocate. Online pruning further terminates degenerate branches early, alleviating high-entropy collapse and improving rollout efficiency under a limited inference budget. 

\textbf{Entropy Increment and Window Smoothing.}
Existing work~\cite{fu2025deep} leverages token-level confidence to prune chain-based rollouts: when confidence stays low or shows a clear degradation trend, generation is terminated early to avoid wasting tokens and to improve estimation efficiency. Inspired by this work, we jointly model token entropy and token-level \emph{confidence}, and use window-based smoothing to obtain stable signals for deciding when to branch and how wide to branch in tree-structured rollouts.
Concretely, to avoid single-step noise, we apply temporal smoothing to both confidence and entropy. Let $W_G$, $W_T$, and $W_H$ denote the group, tail, and entropy window sizes, respectively. To keep the statistics well-defined in the early steps, we use effective window lengths
$W_G(t)=\min(W_G,t)$, $W_T(t)=\min(W_T,t)$, and $W_H(t)=\min(W_H,t-1)$.
We define grouped confidence as the moving average over the most recent $W_G(t)$ steps:
\begin{equation}
C_t^{G}
= \frac{1}{W_G(t)} \sum_{s=t-W_G(t)+1}^{t} C_s .
\label{eq: grouped_conf}
\end{equation}
Similarly, tail confidence is defined as
\begin{equation}
C_{t}^{\text{tail}}
= \frac{1}{W_T(t)} \sum_{s=t-W_T(t)+1}^{t} C_s .
\label{eq: tail_conf}
\end{equation}
For entropy, we compute the historical mean entropy over the past $W_H(t)$ steps:
\begin{equation}
\bar{H}_{t-1}
= \frac{1}{W_H(t)} \sum_{s=t-W_H(t)}^{t-1} H_s .
\label{eq: mean_entropy}
\end{equation}
$\Delta H_t \triangleq \bar{H}_t - \bar{H}_{t-1}$ is the entropy increment at step $t$.

\textbf{Branch Width: Entropy--Confidence Joint Scheduling.}
Once a branching point is triggered, we determine the branch width $B_t$ by jointly considering
(i) the local token-distribution entropy and (ii) the grouped confidence across rollouts.
At decoding step $t$ for a given prompt $x$, we define the predictive entropy at the current node as
$H_t := \mathcal{H}\!\big(\pi_{\theta}(\cdot\mid x, y_{<t})\big)$.
Before applying the scheduling rule, we run a short warm-up phase to collect entropy statistics under the current model and task distribution. This stage does not update model parameters; it only calibrates the entropy scale to avoid over or under-branching caused by prompt or model-dependent entropy magnitudes.
During warm-up, we estimate an entropy lower bound $H_{\text{low}}$ and an entropy upper bound $H_{\text{high}}$,
where $H_{\text{high}}$ is set as a high-quantile estimate or a moving maximum of the observed entropies.
We further define the grouped confidence $C^{G}_t$ as the aggregation of token confidences over the $G$
rollouts sampled for the same prompt at step $t$.
We compute $B_t$ as:
\begin{equation}
\begin{aligned}
B_t
&= \mathrm{clip}\Bigl(
\mathrm{round}\Bigl(
B_{\min}
+ \alpha_B \cdot \frac{H_t - H_{\text{low}}}{H_{\text{high}} - H_{\text{low}} + \varepsilon}
\\ & -\beta_B \cdot \frac{C_t^{G} - s_{\text{branch}}}{|s_{\text{branch}}| + \varepsilon}
\Bigr),
\;B_{\min},\;B_{\max}
\Bigr),
\end{aligned}
\label{eq:branch-width}
\end{equation}
where $B_{\min}$ and $B_{\max}$ denote the minimum and maximum number of branches, respectively.
$\alpha_B$ and $\beta_B$ control the relative influence of entropy and confidence,
$s_{\text{branch}}$ is a reference confidence level for branching,
and $\varepsilon$ is for numerical stability.
We first compute a real-valued width, then round it to an integer, and finally clip it to $[B_{\min},B_{\max}]$.
This scheduling rule satisfies:
(i) high $H_t$ with low $C_t^{G}$ increases $B_t$ to encourage exploration;
(ii) high $H_t$ with high $C_t^{G}$ suppresses excessive branching to avoid high-entropy traps;
(iii) when $B_t \le 1$, we treat the node as non-branching and revert to a chain-style rollout at that step.

\textbf{Branch Selection, Expansion, and Online Pruning.}
At a branching step $t$, given $B_t$, we select the top-$B_t$ tokens by log-probability under
$\pi_{\theta}(\cdot\mid x, y_{<t})$ as branching candidates. Each child branch is autoregressively expanded
until reaching a leaf (a complete answer), yielding candidate trajectories $\{o_i\}_{i=1}^{G}$.
We record the log-probabilities of selected branching tokens for diagnostics.
To mitigate rollout collapse and avoid wasting budget on persistently unpromising branches, we perform \emph{online pruning} during rollout: once a branch exhibits sustained low confidence or abnormal uncertainty escalation, we terminate it early so that more budget is allocated to informative branches.
We use tail-smoothed confidence $C^{\text{tail}}_{i,t}$ (Eq.~\ref{eq: tail_conf}).
\emph{(i) Low-confidence pruning.}
We monitor the running minimum of grouped confidence:
\begin{equation}
m_t = \min_{s \le t} C_s^{G}.
\end{equation}
If $m_t < \tau_{\text{prune}}$, we prune the branch at step $t$,
where $\tau_{\text{prune}}$ controls the minimum acceptable grouped confidence.
\emph{(ii) Tail-decline pruning.}
We maintain a counter of consecutive decreases in tail confidence:
\begin{equation}
d_t =
\begin{cases}
d_{t-1} + 1, & \text{if } C^{\text{tail}}_{i,t} < C^{\text{tail}}_{i,t-1}, \\
0,           & \text{otherwise}.
\end{cases}
\end{equation}
If $d_t \ge S_{\text{tail}}$ and $C^{\text{tail}}_{i,t} \le \tau_{\text{tail}}$, we prune the branch,
where $S_{\text{tail}}$ is the patience length and $\tau_{\text{tail}}$ is the tail-confidence threshold.
\emph{(iii) Entropy-spike pruning.}
We maintain a consecutive spike counter on entropy increments:
\begin{equation}
r_t =
\begin{cases}
r_{t-1} + 1, & \text{if } \Delta H_t > \delta_{\text{upper}}, \\
0,           & \text{otherwise}.
\end{cases}
\end{equation}
If $r_t \ge S_{\Delta}$, we prune the branch (entropy-spike event),
where $\delta_{\text{upper}}$ is the spike threshold and $S_{\Delta}$ is the required consecutive length.
Overall, these three criteria provide complementary early-stopping signals: low-confidence pruning removes branches stuck in globally unreliable regions, tail-decline pruning captures trajectories with steadily deteriorating confidence, and entropy-spike pruning filters out branches entering persistently unstable uncertainty regimes. Together, online pruning reallocates rollout budget toward higher-quality trajectories and reduces the risk of high-entropy traps.

\textbf{Majority Voting Reward Function.} Following existing TTRL methods~\cite{zuo2025ttrl,liu2025ettrl}, we also employ voting to generate pseudo-labels. 
For each prompt $x$, we sample $G$ candidate responses $\{o_i\}_{i=1}^{G}$ under the policy,
where $o_i$ denotes the $i$-th response.
Let $\mathrm{answer}(o_i)$ be the final answer extracted from $o_i$ and $\mathbb{I}\{\cdot\}$ be the indicator function.
We select the majority-voted prediction as the estimated label:
\begin{equation}
\hat{y}=\arg\max_{a}\sum_{i=1}^{G}\mathbb{I}\big(\mathrm{answer}(o_i)=a\big).
\end{equation}
Using $\hat{y}$, we assign a rule-based reward to each candidate trajectory:
\begin{equation}
R_i \;=\; \mathbb{I}\big(\mathrm{answer}(o_i)=\hat{y}\big)\in\{0,1\}.
\end{equation}

\subsection{Confidence-Adaptive Clipping}
TTRL typically relies on pseudo-rewards that are noisy and biased at the early stage. This \emph{early estimation bias} can repeatedly push the policy toward apparently high-reward neighbors favored by the current sampling distribution, even when these neighbors are not truly reliable. Under a fixed clipping radius, such spurious high-reward trajectories may dominate the update, causing rapid distribution sharpening (entropy drop), premature exploitation, and consequently overfitting with exhausted exploration.
To mitigate this issue, we make the trust region \emph{adaptive to confidence}, so that unreliable early signals are prevented from inducing overly aggressive updates. We define the per-token importance ratio at decoding step $t$ as
\begin{equation}
r_{i,t}(\theta)
=
\frac{\pi_{\theta}\!\left(y_{i,t}\mid x, y_{i,<t}\right)}
{\pi_{\mathrm{ref}}\!\left(y_{i,t}\mid x, y_{i,<t}\right)},
\;
\mathrm{clip}\!\left(r_{i,t},\,1-\epsilon,\,1+\epsilon\right).
\label{eq: token_ratio}
\end{equation}
To obtain a stable confidence signal along a trajectory, we apply tail smoothing to the token confidence and denote it as $C^{\text{tail}}_{i,t}$ (Eq.~\ref{eq: tail_conf}). For a trajectory $o_i$ with length $|o_i|$, we define the trajectory-level
tail confidence as the average confidence over the last $W_{\text{tail}}$
tokens:
\begin{equation}
C_{\text{tail}}(o_i)
=
\frac{1}{W_{\text{tail}}}
\sum_{t=|o_i|-W_{\text{tail}}+1}^{|o_i|}
C^{\text{tail}}_{i,t},
\label{eq: traj_tail_conf}
\end{equation}
where $W_{\text{tail}}$ is a fixed tail window size (set to 16 in our code).
We then adjust the clipping radius using $C_{\text{tail}}(o_i)$:
\begin{equation}
\epsilon(o_i)
=
\epsilon_{\min}
+
(\epsilon_{\max}-\epsilon_{\min})\,
\sigma\!\bigl(\kappa (1- C_{\text{tail}}(o_i))\bigr).
\label{eq: adaptive_clip}
\end{equation}
where $\sigma(z)=\frac{1}{1+e^{-z}}$ is the sigmoid function,
$\epsilon_{\min}$ and $\epsilon_{\max}$ specify the minimum/maximum trust region,
$\kappa$ controls sensitivity.
Intuitively, when $C_{\text{tail}}(o_i)$ is high, $\epsilon(o_i)$ approaches $\epsilon_{\min}$, yielding a tighter trust region that prevents early-stage spurious trajectories from dominating the update. When $C_{\text{tail}}(o_i)$ is low, $\epsilon(o_i)$ approaches $\epsilon_{\max}$, allowing more exploratory updates and avoiding premature entropy collapse.

\subsection{Entropy--Confidence Hybrid Advantage Shaping}
Confidence-adaptive clipping controls the \emph{magnitude} of policy updates, but it does not specify \emph{where} within a trajectory the model should learn more aggressively. In TTRL, the most informative learning signals often arise from locally uncertain decisions: these tokens may correspond to genuinely ambiguous reasoning steps that require exploration, yet they are also the regions where early-stage pseudo-label noise is most likely to manifest. Therefore, beyond a trajectory-level objective that only ranks outcomes within a group, we further shape token-level learning using an entropy--confidence signal, aligning optimization with robust exploration rather than nearby overfitting.
We first compute a group-normalized trajectory-level advantage
\begin{equation}
A^{\mathrm{grp}}_{g,i}
=
\frac{
R_{g,i}-\mathrm{mean}\!\left(\{R_{g,j}\}_{j=1}^{G}\right)
}{
\mathrm{std}\!\left(\{R_{g,j}\}_{j=1}^{G}\right)+\varepsilon
},
\label{eq: grp_adv}
\end{equation}
where $g$ indexes a rollout group consisting of the $G$ sampled trajectories for the same prompt, $R_{g,i}$ is the scalar reward of the $i$-th trajectory in group $g$, and $\varepsilon>0$ is a small constant for numerical stability.We broadcast this trajectory-level advantage to each token on the same trajectory, i.e.,
$A^{\mathrm{grp}}_{g,i,t}\equiv A^{\mathrm{grp}}_{g,i}$ for all $t\in\{1,\dots,|o_i|\}$.
We then introduce an entropy--confidence hybrid shaping signal that emphasizes learning on uncertain regions:
\begin{equation}
S_{i,t}
=
\alpha\,H_{i,t}
+
\beta\, {(1-C}_{i,t}),
\label{eq: hybrid_signal}
\end{equation}
where $H_{i,t}$ is the token entropy at step $t$ on trajectory $o_i$, $C_{i,t}\in(0,1]$ is the token-level top-$k$ confidence computed from the logits, and $H_{i,t}$ and $(1-C_{i,t})$ denote masked-whitened versions of entropy and inverse confidence, respectively. The coefficients $\alpha,\beta$ weight the two components. Finally, we incorporate $S_{i,t}$ into the token-level advantage via
\begin{equation}
A^{\mathrm{hyb}}_{i,t}
=
A^{\mathrm{grp}}_{g,i}\bigl(1+aS_{i,t}\bigr),
\label{eq: mult_shaping}
\end{equation}
where $a>0$ is a scaling factor. This hybrid shaping biases policy updates toward informative yet uncertain decisions, complementing confidence-adaptive clipping to reduce early-stage overfitting and maintain exploration.

\subsection{Overall Objective}
Combining confidence-adaptive clipping and entropy--confidence hybrid advantages, we obtain the ECHO objective:
\begin{equation}
\begin{split}
& \mathcal{L}_{\text{ECHO}}(\theta) = \mathbb{E}_{x,\{o_i\}} \Bigg[ 
\frac{1}{G}\sum_{i=1}^{G}\frac{1}{|o_i|}\sum_{t=1}^{|o_i|} \Big( 
 \min\Big(
r_{i,t}(\theta)\, A^{\mathrm{hyb}}_{i,t},\; \\ &
\mathrm{clip}\!\big(r_{i,t}(\theta),\, 
1-\epsilon(o_i),\,1+\epsilon(o_i)\big)\,A^{\mathrm{hyb}}_{i,t}
\Big) \\
& - \beta_{\text{KL}}\,
D_{\text{KL}}\!\Big(
\pi_{\theta}(\cdot\mid x,y_{i,<t})
\,\big\|\,
\pi_{\text{ref}}(\cdot\mid x,y_{i,<t})
\Big)
\Big) \Bigg].
\end{split}
\label{eq: echo_obj}
\end{equation}
The first term is the clipped policy-gradient objective with a confidence-adaptive trust region and hybrid advantages,
while the KL term constrains policy drift for stability. The pseudocode of ECHO is presented in Algorithm~\ref{alg:echo}
. Following standard GRPO derivations with clipped surrogates, the token-wise gradient of our ECHO objective can be written in a PPO-style gated form:
\begin{equation}
\begin{aligned}
\nabla_{\theta}\,\ell_{i,t}(\theta)
& =
F^{\mathrm{clip}}_{i,t}(\theta)\;
A^{\mathrm{hyb}}_{i,t}\;
r_{i,t}(\theta)\; \\ &
\nabla_{\theta}\log \pi_{\theta}\!\left(y_{i,t}\mid x_i, y_{i,<t}\right),
\end{aligned}
\label{eq:echo_grad_gate}
\end{equation}
where $F^{\mathrm{clip}}_{i,t}(\theta)\in\{0,1\}$ is the clipping gate induced by the importance ratio $r_{i,t}(\theta)$ and the confidence-adaptive trust-region radius $\epsilon(o_i)$.
This view reveals that ECHO alleviates early self-reinforcing overfitting by shrinking $\epsilon(o_i)$ for overconfident trajectories, thereby upper-bounding the impact of spurious high-reward samples under noisy pseudo labels.
Meanwhile, the hybrid shaping term $A^{\mathrm{hyb}}_{i,t}$ reallocates gradient mass toward informative yet low-certainty decisions while suppressing degenerate high-entropy traps, mitigating rollout collapse.
Finally, the KL regularizer provides a per-prefix pull-back to $\pi_{\mathrm{ref}}$, controlling policy drift and stabilizing exploration. Appendix~\ref{ap: proof} provides a detailed derivation and explanation of the ECHO gradient.


\begin{table*}[h]
\centering
\caption{Overall performance on 5 challenging reasoning tasks. The best result in each block is in \textbf{bold}, and the best baseline is \underline{underlined}. Each cell reports pass@16 accuracy.}
\begin{tabular}{cccccccc}
\toprule[1pt]
Methods & Training data & AIME2024 & AMC & MATH-500 & GPQA & AIME2025 & Avg. \\ \midrule

\textit{\textbf{Qwen2.5-7B}} & - 
& 20.0 & 68.7 & 83.0 & 40.4 & 23.3 & 47.1 \\
\hdashline

\multicolumn{1}{c}{w/TTRL}     & \multirow{5}{*}{AIME2024} 
& 23.3 & 69.9 & 83.4 & 43.4 & \underline{30.0} & 50.0 \\
\multicolumn{1}{c}{w/ETMR}     &  
& 24.6 & 71.1 & \underline{84.4} & 42.9 & \underline{30.0} & 50.6 \\
\multicolumn{1}{c}{w/EVOL-RL}  &  
& \underline{26.7} & 72.3 & 83.6 & \underline{44.4} & 26.7 & 50.7 \\
\multicolumn{1}{c}{w/INTUITOR} &  
& \underline{26.7} & \underline{73.5} & \underline{84.4} & 43.5 & \underline{30.0} & \underline{51.6} \\
\rowcolor{blue!15} \multicolumn{1}{c}{Ours} &  
& \textbf{30.0} & \textbf{75.9} & \textbf{89.4} & \textbf{47.7} & \textbf{33.3} & \textbf{55.3} \\
\multicolumn{1}{c}{$\Delta$} & - 
& 3.3 & 2.4 & 5.0 & 3.3 & 3.3 & 3.7 \\
\hdashline

\multicolumn{1}{c}{w/TTRL}     & \multirow{5}{*}{MATH-500} 
& 23.6 & \underline{73.5} & 87.2 & 42.4 & 30.0 & 51.3 \\
\multicolumn{1}{c}{w/ETMR}     &  
& 26.7 & 72.3 & \underline{88.6} & 44.4 & 33.3 & 53.1 \\
\multicolumn{1}{c}{w/EVOL-RL}  &  
& 26.7 & 72.3 & 86.0 & 42.9 & 33.3 & 52.2 \\
\multicolumn{1}{c}{w/INTUITOR} &  
& \underline{30.3} & \underline{73.5} & 88.0 & \underline{46.0} & \underline{40.0} & \underline{55.6} \\
\rowcolor{blue!15} \multicolumn{1}{c}{Ours} &  
& \textbf{33.3} & \textbf{75.9} & \textbf{90.0} & \textbf{49.0} & \textbf{43.3} & \textbf{58.3} \\
\multicolumn{1}{c}{$\Delta$} & - 
& 3.0 & 2.4 & 1.4 & 3.0 & 3.3 & 2.7 \\

\midrule[1pt]

\textit{\textbf{Qwen3-8B}} & - 
& 50.0 & 80.7 & 91.6 & 70.2 & 50.0 & 68.5 \\
\hdashline

\multicolumn{1}{c}{w/TTRL}     & \multirow{5}{*}{AIME2024} 
& 50.0 & 81.9 & 91.8 & 74.7 & 33.3 & 66.3 \\
\multicolumn{1}{c}{w/ETMR}     &  
& 50.0 & 83.1 & 90.8 & 76.3 & 36.7 & 67.4 \\
\multicolumn{1}{c}{w/EVOL-RL}  &  
& 46.7 & 85.5 & 94.2 & 78.3 & 34.7 & 67.9 \\
\multicolumn{1}{c}{w/INTUITOR} &  
& \underline{50.7} & \underline{88.0} & \underline{94.8} & \underline{79.3} & \underline{56.7} & \underline{73.9} \\
\rowcolor{blue!15} \multicolumn{1}{c}{Ours} &  
& \textbf{53.3} & \textbf{90.4} & \textbf{95.4} & \textbf{81.3} & \textbf{60.0} & \textbf{76.1} \\
\multicolumn{1}{c}{$\Delta$} & - 
& 2.6 & 2.4 & 0.6 & 2.0 & 3.3 & 2.2 \\
\hdashline

\multicolumn{1}{c}{w/TTRL}     & \multirow{5}{*}{MATH-500} 
& \underline{43.3} & 79.5 & 91.8 & 84.3 & 50.0 & 69.1 \\
\multicolumn{1}{c}{w/ETMR}     &  
& \underline{43.3} & 80.7 & 92.0 & 84.8 & \underline{53.5} & 70.9 \\
\multicolumn{1}{c}{w/EVOL-RL}  &  
& 36.7 & 83.1 & 92.2 & 85.4 & 46.7 & 68.8 \\
\multicolumn{1}{c}{w/INTUITOR} &  
& \underline{43.3} & \underline{85.5} & \underline{93.6} & \underline{87.4} & 53.3 & \underline{72.6} \\
\rowcolor{blue!15} \multicolumn{1}{c}{Ours} &  
& \textbf{46.7} & \textbf{90.4} & \textbf{94.6} & \textbf{89.9} & \textbf{56.7} & \textbf{75.6} \\
\multicolumn{1}{c}{$\Delta$} & - 
& 3.4 & 4.9 & 1.0 & 2.5 & 3.2 & 3.0 \\
\bottomrule
\end{tabular}
\label{tab:overall_pass16}
\end{table*}

\section{Experiments}

\subsection{Experimental Setup}
\textbf{Benchmarks}. We train and evaluate our models on two categories of benchmarks: natural-language mathematical reasoning and multimodal mathematical reasoning, and assess both performance and generalization on the corresponding test sets.
Specifically, the natural-language benchmarks include AIME2024, AMC, MATH-500, GPQA-Diamond, and AIME2025, covering a broad range of problem types such as algebra, number theory, combinatorics, geometry, and statistics.
The multi-modal benchmarks include Geometry3k, GeoQA, MathVision, MathVista, MathVerse, and LogicVista, where each problem typically contains both an image and text, spanning diverse visual reasoning scenarios such as geometry, charts, and tables.
Detailed dataset descriptions are provided in the Appendix~\ref{ap: benchmark}.

\textbf{Baselines}. We compare against the following representative methods: TTRL~\cite{zuo2025ttrl}, ETMR~\cite{liu2025ettrl}, EVOL-RL~\cite{zhou2025evolving}, INTUITOR~\cite{zhao2025learning} , and MM-UPT~\cite{wei2025unsupervised}, which cover TTRL methods (TTRL, ETMR), evolutionary unsupervised reinforcement learning (EVOL-RL), internal-feedback-driven self-improvement (INTUITOR), and a multi-modal unsupervised post-training framework (MM-UPT). We use pass@n as the unified evaluation metric for all methods, enabling a comprehensive assessment of our approach in both pure-text and vision--language mathematical reasoning settings.
Additional Model details and implementation information are included in the Appendix~\ref{ap: model details} and Appendix~\ref{ap: Implementation details} .

\begin{table*}[t]
\centering
\caption{Overall performance on four multimodal reasoning benchmarks.
Best results are in \textbf{bold}. Each cell reports pass@1 accuracy.}
\label{tab:mm_reasoning_overall}
\renewcommand{\arraystretch}{1.05}
\setlength{\tabcolsep}{6pt}
\begin{tabular}{ccccccc}
\toprule[1pt]
Methods & Training data & Math Vision & Math Verse & Math Vista & LogicVista & Avg \\
\midrule[1pt]
\textbf{\textit{Qwen2.5-VL-7B}} & - & 24.9 & 43.8 & 66.3 & 26.0 & 40.3 \\
\cdashline{1-7}
\quad w/ MM-UPT & \multirow{2}{*}{Geometry3k} &
\underline{27.3} & \underline{42.5} & \underline{68.5} & \underline{26.6} & \underline{41.2} \\
\quad Ours      &                            &
\textbf{28.1} & \textbf{44.5} & \textbf{69.0} & \textbf{27.9} & \textbf{42.4} \\
\rowcolor{blue!15} $\Delta$ & - & +0.8 & +0.7 & +0.5 & +1.3 & +1.2  \\
\hdashline
\quad w/ MM-UPT & \multirow{2}{*}{GeoQA}      &
\underline{27.1} & \underline{43.7} & \underline{68.9} & \underline{26.8} & \underline{41.6} \\
\quad Ours      &                            &
\textbf{28.5} & \textbf{44.7} & \textbf{69.6} & \textbf{28.4} & \textbf{42.8} \\
\rowcolor{blue!15}$\Delta$ & - & +1.4 & +1.0 & +0.7 & +1.6 & +1.2  \\
\midrule[1pt]
\textit{\textbf{Qwen3-VL-8B}} & - & 62.7 & 62.1 & 77.2 & 22.7 & 56.2 \\
\cdashline{1-7}
\quad w/ MM-UPT & \multirow{2}{*}{Geometry3k} &
\underline{64.7} & \underline{64.1} & \underline{79.2} & \underline{23.9} & \underline{58.0} \\
\quad Ours      &                            &
\textbf{65.4} & \textbf{65.7} & \textbf{79.8} & \textbf{25.9} & \textbf{59.2} \\
\rowcolor{blue!15}$\Delta$ & - & +0.7 & +1.6 & +1.6 & +2.0 & +1.5  \\
\hdashline
\quad w/ MM-UPT & \multirow{2}{*}{GeoQA}      &
\underline{64.1} & \underline{64.6} & \underline{79.5} & \underline{24.2} & \underline{58.1} \\
\quad Ours      &                            &
\textbf{66.5} & \textbf{67.3} & \textbf{80.9} & \textbf{27.3} & \textbf{60.5} \\
\rowcolor{blue!15}$\Delta$ & - & +2.4 & +2.7 & +1.4 & +3.1 & +2.4 \\
\bottomrule[1pt]
\vspace{-0.5cm}
\end{tabular}
\end{table*}

\subsection{Main Results}

Our main results are summarized in Tables~\ref{tab:overall_pass16} and Table~\ref{tab:mm_reasoning_overall}. Overall, ECHO demonstrates
consistent and substantial advantages on both natural-language reasoning benchmarks
(pass@16) and multimodal reasoning benchmarks (pass@1), indicating strong transferability
and robustness. We highlight three key findings:
(1) \textbf{ECHO consistently improves pass@16 over strong baselines and remains robust
to changes in training data and environments}.
As shown in Table~1, ECHO steadily outperforms a range of test-time optimization
methods and confidence baselines on both Qwen2.5-7B and Qwen3-8B (Think) backbones, delivering consistent gains across different training setups. In aggregate, ECHO delivers consistent average gains of 0.63\%–12.36\% across natural-language reasoning benchmarks, and can achieve up to 12.36\% improvements on the most challenging tasks (e.g., AIME2025). These results suggest that ECHO more effectively filters low-quality trajectories and mitigates search degeneration across varying training distributions, leading to reliable and sustained improvements in reasoning.
(2) \textbf{ECHO extends seamlessly to multimodal reasoning and yields stable pass@1 gains.} Table~\ref{tab:mm_reasoning_overall} shows that ECHO remains effective on multimodal benchmarks: across VLM backbones such as Qwen2.5-VL-7B and Qwen3-VL-8B, ECHO consistently improves the average pass@1 by about 2.6\%--4.1\%, and achieves up to 12.8\% relative gains on the most challenging multimodal tasks (LogicVista). This indicates that ECHO is not limited to text-only reasoning, but can also enhance cross-modal reasoning by improving trajectory quality and final-answer reliability via entropy--confidence scheduling.
(3) \textbf{ECHO remains consistently effective even under strict IID settings}.
Table~\ref{tab:IID_main_NLP} further verifies the robustness of ECHO under strict IID evaluation (training and testing on the same dataset). Across multiple benchmarks and model scales, ECHO delivers consistent improvements: it increases average accuracy by 4.01\%–8.88\% (relative gain in the Avg. column) and achieves up to 16.50\% gains on high-difficulty tasks. This suggests that even without relying on distribution shift, ECHO can continuously improve reasoning quality through more effective search and update dynamics.
In summary, ECHO yields consistent advantages across model scales, training data, and task modalities, validating its effectiveness in balancing exploration and stability for test-time reasoning optimization and demonstrating strong generalization capability.

\subsection{Ablation Study}
Table~\ref{tab:echo_ablation_qwen2.5_7b} presents an ablation study of the three key components in ECHO. EC-Tree improves trajectory quality during tree search via joint entropy--confidence scheduling; CA-Clip stabilizes policy updates with confidence-adaptive clipping, mitigating early estimation bias; and E-SC Adv enhances learning from exploratory yet uncertain samples through entropy--confidence advantage shaping. Overall, removing any component degrades performance, while the full ECHO
consistently achieves the best results across all benchmarks and both training settings. In particular, removing EC-Tree results in the largest overall drop, indicating that high-quality candidate trajectories underpin performance gains. Removing CA-Clip causes a dramatic decline on challenging benchmarks (e.g., AIME2025 drops from 54.3 to 30.1), highlighting that stable confidence constraints are crucial for suppressing early bias and improving generalization. Finally, removing E-SC Adv yields consistent but milder degradations across multiple benchmarks, validating the sustained contribution of entropy-confidence advantages in promoting exploratory learning. 

\begin{figure}[!t]
    \centering
    \begin{subfigure}{0.23\textwidth}
        \centering
    \includegraphics[width=\textwidth]{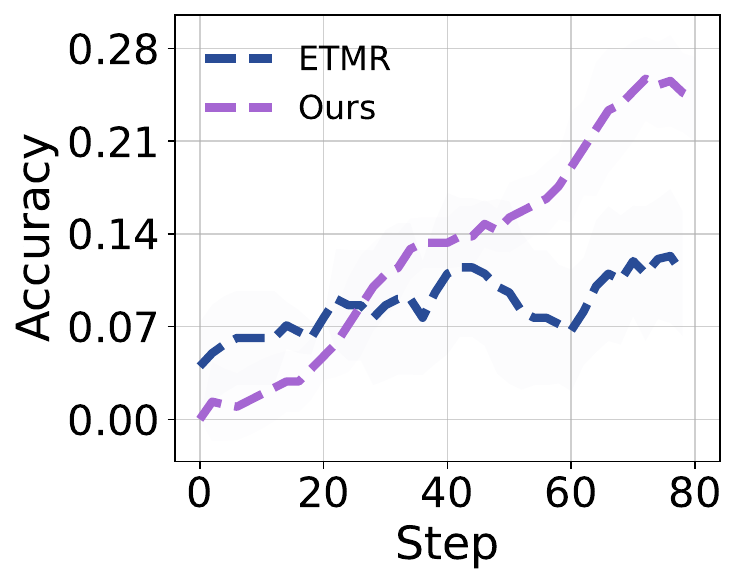}
    \end{subfigure}
    \begin{subfigure}{0.23\textwidth}
        \centering
        \includegraphics[width=\textwidth]{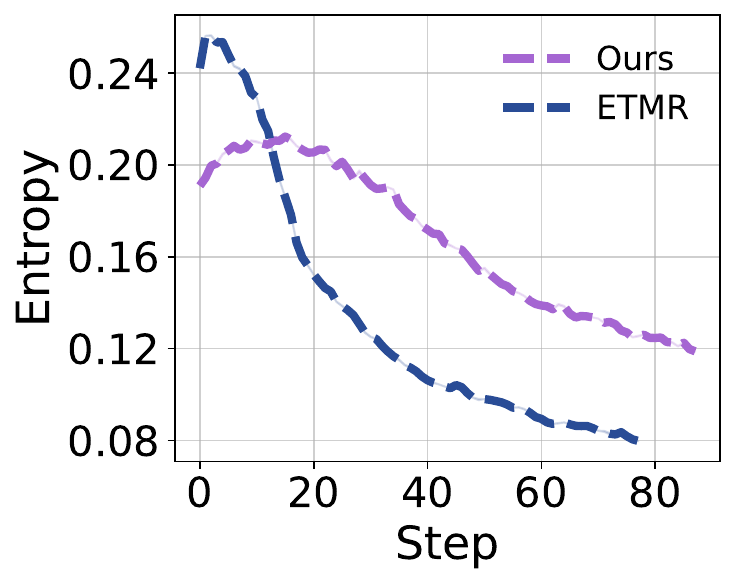}
    \end{subfigure}
    \caption{Training dynamics for Ours and ETMR. Both models trained on AIME2024.}
    \label{fig: analysis_fig}
     \vspace{-0.5cm}
\end{figure}
\subsection{Quantitative Analysis}
\textbf{This section examines whether our method mitigates the two challenges raised in Section 1}. Following work~\cite{dong2025agentic}, we reuse the rollout configuration from our main experiments and randomly collect branching trajectories across multiple rollout steps (300 problems and 4.8k trajectories on MATH-500). We then encode trajectories with BGEM3, apply PCA for dimensionality reduction, and use DBSCAN clustering to visualize the trajectory distribution to assess whether the branching budget is evenly allocated or collapses into a small number of dominant paths (Figure).
(1) \textbf{High-entropy collapse}. As shown in Figure~\ref{fig: analysis_fig} and Figure~\ref{fig: analysis_fig_v2}, purely entropy-driven tree search (TTRL/ETMR) tends to exhibit consecutive high-entropy transitions, where the budget is dominated by a few trajectories, resulting in limited coverage and triggering high-entropy rollout collapse. We incorporate confidence constraints and online pruning during branching to prevent over-expansion into noisy high-entropy regions and encourage a more balanced allocation of branches. (2) \textbf{Pseudo-label overfitting}. Figure~\ref{fig: analysis_fig} indicates that entropy-driven methods often reduce entropy too rapidly in early stages and amplify unreliable pseudo-labels, leading to premature convergence and self-reinforcing overfitting. We adopt confidence-adaptive clipping and entropy–confidence advantage shaping to slow entropy decay and reduce the dominance of early noisy supervision in policy updates.
Overall, our method simultaneously alleviates rollout collapse caused by high-entropy branching and self-reinforcing overfitting induced by early pseudo-label bias, resulting in more robust test-time reinforcement learning behavior.

Due to space constraints, we provide additional experiments and 
analyses in the appendix. Appendix~\ref{ap: when_fails} investigates scenarios in which ECHO may fail; Appendix~\ref{ap: compare_GRPO} compares label-free ECHO with supervised GRPO; and Appendix~\ref{ap: vary_pass@k} reports performance under varying Pass@$K$ settings.

\begin{figure}[!t]
    \centering
    \begin{subfigure}[b]{0.45\textwidth}
        \centering
    \includegraphics[width=\textwidth]{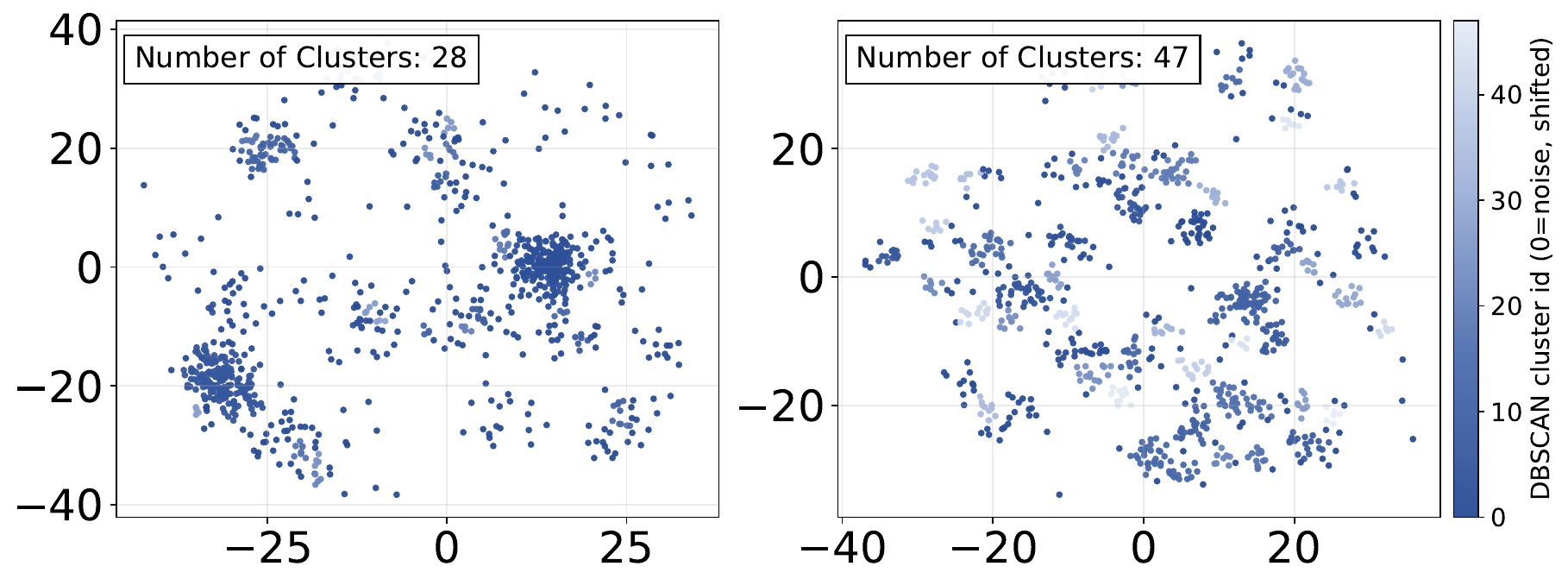}
    \end{subfigure}
    \caption{Visualization of Rollout diversity: ETMR and Ours.}
    \label{fig: analysis_fig_v2}
    \vspace{-10pt}
\end{figure}

\section{Related Work}

\subsection{Reinforcement Learning for LLM Reasoning}
Reinforcement learning \cite{wiering01,kaelbling1996reinforcement} plays an important role in enhancing the reasoning capabilities of large language models, especially under the paradigm of Reinforcement Learning from Human Feedback (RLHF) \cite{chaudhari2025rlhf}. RLHF typically uses a reward model to capture human preferences and applies Proximal Policy Optimization (PPO) \cite{schulman2017proximal, wang2020truly} to update the policy with on-policy samples. To improve training stability and sample efficiency, PPO often incorporates a KL constraint and variance-reduction techniques such as Generalized Advantage Estimation, enabling the model to progressively improve generation quality without drifting too far from a reference policy. Recent works \cite{shao2024deepseekmath,yu2025dapo,zheng2025group} have further refined the PPO pipeline; for example, GRPO estimates the baseline via within-group score normalization, replacing explicit critic learning and mitigating the associated instability. However, most of these approaches \cite{hu2025open,xi2025bapo} are still optimized primarily on supervised training data, while at test time the model is often required to produce longer and more complex chains of thought on out-of-distribution problems. This mismatch reduces robustness and amplifies errors, limiting real-world performance.

\subsection{Self-Improvement and Test-time Reinforcement Learning}
Methods for improving LLM reasoning are typically trained with ground-truth labels or verifiable external feedback. When such supervision is unavailable or the distribution shifts, reward signals can become unreliable and sparse, limiting optimization. To mitigate this, recent work constructs rewards without ground-truth labels \cite{zuo2025ttrl,zanella2025realistic}, mainly in two ways: (1) confidence-based self-rewarding, which derives rewards from the model’s output statistics (e.g., low entropy/high confidence) and consistency measures such as agreement across samples and stability under paraphrases \cite{zhao2025learning,yuan2024self,prabhudesai2025maximizing,shafayat2025can}; and (2) majority-bootstrapped pseudo-label rewards, which sample multiple outputs and treat the majority answer as a pseudo-label for RL-style updates (e.g., TTRL) \cite{zuo2025ttrl}. Despite their effectiveness, these methods have limitations: TTRL’s chain-based rollouts often require heavy sampling to obtain reliable pseudo-labels, and ETMR \cite{liu2025ettrl} improves exploration with entropy-fork trees but can still suffer from entropy collapse, where high-entropy branches dominate and destabilize learning.

\section{Conclusion}
This work proposes ECHO, an entropy–confidence hybrid test-time RL framework for self-improvement without external supervision. ECHO combines entropy–confidence guided tree-structured rollouts with online pruning to focus computation on informative branches and avoid high-entropy traps. It further stabilizes optimization via confidence-adaptive clipping and entropy–confidence hybrid advantage shaping, prioritizing learning on uncertain yet decision-critical tokens. Extensive experiments validate its effectiveness.

\section*{Acknowledgements}
This work is partially supported by the National Natural Science Foundation of China under Grant No. 62576083. We extend special thanks to the National Supercomputer Center in Guangzhou for their computational support.

\section*{Impact Statement}
This paper presents work whose goal is to advance the field of machine learning. There are many potential societal consequences of our work, none of which we feel must be specifically highlighted here.
\nocite{langley00}

\bibliography{example_paper}
\bibliographystyle{icml2026}

\newpage
\appendix
\onecolumn

\begin{table*}[t]
\centering
\setlength{\tabcolsep}{7pt}
\renewcommand{\arraystretch}{1.15}
\caption{Performance of Qwen2.5-7B-Base with ECHO} and its ablations on five benchmarks. Each cell reports pass@16 accuracy.
\begin{tabular}{lcccccc}
\toprule
Models & Datasets & AIME2024 & AMC & MATH-500 & GPQA & AIME2025 \\
\midrule
\textit{\textbf{Qwen2.5-7B-Base}} & --        
& 20.0 & 68.7 & 83.0 & 40.4 & 27.3 \\
-EC-Tree  & \multirow{4}{*}{\textbf{AIME2024}}
& 23.3 & 54.2 & 74.6 & 32.3 & 29.7 \\
-CA-Clip  & 
& 26.7 & 69.9 & 84.0 & 43.9 & 30.1 \\
-E-SC Adv & 
& 26.7 & 78.7 & 84.0 & 44.4 & 53.5 \\
\rowcolor{blue!15}\;\textbf{Ours} & 
& \textbf{30.0} & \textbf{71.1} & \textbf{89.4} & \textbf{47.7} & \textbf{54.3} \\
\cdashline{1-7}
-EC-Tree  & \multirow{4}{*}{\textbf{MATH-500}}
& 23.3 & 56.6 & 77.4 & 39.4 & 33.5 \\
-CA-Clip  & 
& 30.0 & 72.3 & 88.2 & 46.5 & 41.0 \\
-E-SC Adv & 
& 30.0 & 73.5 & 89.4 & 46.5 & 42.5 \\
\rowcolor{blue!15}\;\textbf{Ours} & 
& \textbf{33.3} & \textbf{75.9} & \textbf{90.0} & \textbf{49.0} & \textbf{43.3} \\
\bottomrule
\label{tab:echo_ablation_qwen2.5_7b}
\end{tabular}
\end{table*}

\begin{table*}[t]
\centering
\setlength{\tabcolsep}{7pt}
\renewcommand{\arraystretch}{1.15}
\definecolor{oursblue}{RGB}{220,225,255}
\caption{Overall performance on 5 challenging reasoning tasks. The best result is in \textbf{bold}, and the best baseline is \underline{underlined}. Each cell reports pass@16 accuracy (averaged over 32 rollouts).}
\begin{tabular}{lcccccc}
\toprule
Methods \& datasets & AIME2024 & AMC & MATH-500 & GPQA & AIME2025 & AVG. \\
\midrule

\textit{Qwen2.5-Math-1.5B} & 13.3 & 28.9 & 71.0 & 30.8 & 20.0 & 32.8 \\
\quad w/TTRL               & 15.8 & 30.1 & 73.0 & 34.8 & 20.0 & 34.7 \\
\quad w/ETMR               & 16.7 & 32.5 & 76.9 & 35.4 & \underline{23.3} & 37.0 \\
\quad w/INTUITOR           & \underline{20.0} & \underline{34.9} & \underline{77.4} & \underline{35.9} & \underline{23.3} & \underline{38.3} \\
\rowcolor{oursblue}
\textbf{\quad Ours}        & \textbf{23.3} & \textbf{39.8} & \textbf{80.4} & \textbf{38.4} & \textbf{26.7} & \textbf{41.7} \\
\quad $\Delta$             & 3.3 & 4.9 & 3.0 & 2.5 & 3.4 & 3.4 \\
\hdashline

\textit{Qwen2.5-Math-7B} & 36.7 & 69.9 & 85.0 & 60.1 & 30.0 & 56.3 \\
\quad w/TTRL             & 40.0 & 73.5 & 86.4 & 60.6 & 36.7 & 59.4 \\
\quad w/ETMR             & 40.0 & 73.5 & 88.4 & 61.6 & 40.0 & 60.7 \\
\quad w/INTUITOR         & \underline{43.3} & \underline{75.4} & \underline{90.0} & \underline{62.6} & \underline{40.0} & \underline{62.3} \\
\rowcolor{oursblue}
\textbf{\quad Ours}      & \textbf{46.7} & \textbf{77.1} & \textbf{92.4} & \textbf{64.1} & \textbf{43.3} & \textbf{64.7} \\
\quad $\Delta$           & 3.4 & 1.7 & 2.4 & 1.5 & 3.3 & 2.5 \\
\hdashline

\textit{Qwen2.5-7B} & 20.0 & 68.7 & 83.0 & 55.6 & 23.3 & 50.1 \\
\quad w/TTRL        & 23.3 & 71.1 & 87.2 & 58.1 & 30.0 & 53.9 \\
\quad w/ETMR        & 24.6 & 72.3 & 88.6 & 60.6 & 30.0 & 55.2 \\
\quad w/INTUITOR    & \underline{26.7} & \underline{73.5} & \underline{88.0} & \underline{61.1} & \underline{30.0} & \underline{55.9} \\
\rowcolor{oursblue}
\textbf{\quad Ours} & \textbf{30.0} & \textbf{75.9} & \textbf{90.0} & \textbf{62.6} & \textbf{33.3} & \textbf{58.4} \\
\quad $\Delta$      & 3.3 & 2.4 & 1.4 & 1.5 & 3.3 & 2.5 \\
\hdashline

\textit{Llama3-8B} & 20.0 & 65.1 & 82.0 & 52.0 & 23.3 & 48.5 \\
\quad w/TTRL        & 16.7 & 66.3 & 84.0 & 57.1 & 23.3 & 49.5 \\
\quad w/INTUITOR    & \underline{20.0} & 66.3 & 86.2 & 58.6 & 26.7 & 51.6 \\
\quad w/ETMR        & \underline{20.0} & \underline{67.5} & \underline{86.4} & \underline{58.6} & \underline{26.7} & \underline{51.8} \\
\rowcolor{oursblue}
\textbf{\quad Ours} & \textbf{23.3} & \textbf{68.7} & \textbf{88.8} & \textbf{60.6} & \textbf{30.0} & \textbf{54.3} \\
\quad $\Delta$      & 3.3 & 1.2 & 2.4 & 2.0 & 3.3 & 2.4 \\

\bottomrule
\end{tabular}
\label{tab:IID_main_NLP}
\end{table*}
\section{Experimental Setup}

\subsection{Benchmarks}
\label{ap: benchmark}
We evaluate our model on both text-only and multimodal mathematical reasoning benchmarks. Details is shown as follows.

\paragraph{(1) Text-only mathematical reasoning benchmarks.}
We evaluate the model on AIME2024, AMC, MATH-500, GPQA-Diamond, and AIME2025 to assess
its robustness under challenging reasoning and cross-domain knowledge requirements.
\textbf{AIME2024~\cite{li2024numinamath} / AIME2025.}
These benchmarks consist of the AIME 2024/2025 I \& II exams (30 problems each).They represent contest-level mathematical reasoning tasks where answers are typically short (often integers).
\textbf{AMC~\cite{li2024numinamath}} is a classic math competition benchmark with relatively short problems covering algebra, geometry, combinatorics, and number theory. It is commonly used to evaluate fundamental mathematical reasoning and robustness (grading can be based on option matching or numeric answers, depending on the evaluation setup).
\textbf{MATH-500~\cite{hendrycks2021measuring}} is a 500-problem subset sampled from the MATH test set, covering diverse mathematical topics.
Answers are often exact strings or mathematical expressions; evaluation typically involves answer normalization
and optionally symbolic-equivalence checking to reduce formatting-induced mismatches.
\textbf{GPQA-Diamond~\cite{rein2024gpqa}} is a graduate-level, Google-proof science QA benchmark. Its Diamond subset contains 198 high-consistency
questions, primarily in multiple-choice format, spanning Biology, Chemistry, and Physics. It emphasizes domain knowledge and complex reasoning.

\paragraph{(2) Multimodal mathematical reasoning benchmarks.}
We further evaluate visual mathematical reasoning on MathVision, MathVista, MathVerse, and LogicVista.
\textbf{MathVision~\cite{wang2024measuring}.}
It contains 3,040 competition-style math problems with visual context, covering 12 grade levels, 16 subjects, and 5 difficulty levels, including topics such as analytic geometry, combinatorial geometry, and topology.
\textbf{MathVista~\cite{lu2023mathvista}.}
It includes 1,000 problems spanning diverse visual scenarios such as geometry, charts, and tables, and is designed to evaluate mathematical reasoning in visual contexts.
\textbf{MathVerse~\cite{zhang2024mathverse}.}
Its test set contains 3,940 diagram-based multi-subject math problems collected from public sources, focusing on core visual reasoning scenarios such as plane and solid geometry, enabling fair and fine-grained evaluation.

\subsection{Models Details}
\label{ap: model details}

We conduct experiments across backbone models of different scales and systematically compare a wide range of baseline methods.
Our backbones mainly include the \textbf{Qwen} family (Qwen2.5-Math-1.5B~\cite{team2024qwen2}, Qwen2.5-Math-7B~\cite{team2024qwen2}, Qwen2.5-7B-Base~\cite{hui2024qwen2}, and Qwen3-8B-Base~\cite{yang2025qwen3}),
as well as \textbf{Qwen multimodal} models (Qwen2.5-VL-7B~\cite{bai2025qwen2} and Qwen3-VL-8B~\cite{bai2025qwen3vltechnicalreport}). We also validate our findings on the
\textbf{LLaMA} family (LLaMA-3.1-8B-Instruct~\cite{grattafiori2024llama}).
The baselines cover representative test-time and post-training strategies, including TTRL, ETMR, EVOL-RL, INTUITOR,
and the multimodal test-time reinforcement learning method MM-UPT. We briefly describe these baselines below:
\begin{itemize}[leftmargin=0pt]
  \item \textbf{TTRL}~\cite{zuo2025ttrl} is a novel approach that trains LLMs with reinforcement learning (RL) on unlabeled data.
  It leverages priors encoded in pre-trained models as learning signals, enabling self-evolution and self-improvement at test time.
  \item \textbf{ETMR}~\cite{liu2025ettrl} extends TTRL by replacing the chain-based rollout with an \emph{Entropy-fork Tree Majority Rollout},
  aiming to better balance exploration and exploitation in test-time reinforcement learning and improve the effectiveness of search and sampling.
  \item \textbf{EVOL-RL}~\cite{zhou2025evolving} introduces a self-evolution framework that jointly considers stability and diversity:
  it retains the stability of majority-vote selection while incorporating an explicit \emph{variation incentive} that rewards semantic novelty,
  thereby mitigating the \emph{entropy collapse} issue that can arise in TTRL.
  \item \textbf{INTUITOR}~\cite{zhao2025learning} is an RLIF method whose sole reward signal is the model's own confidence, termed \emph{confidence}.
  It optimizes the model using this intrinsic signal, without requiring external labels or additional evaluators.
  \item \textbf{MM-UPT}~\cite{wei2025unsupervised} targets unlabeled multimodal RL by generating pseudo-labels via a voting mechanism,
  providing supervision signals to align and optimize multimodal models without human annotations.
\end{itemize}

\subsection{Implementation Details}
\label{ap: Implementation details}
Following the setups of recent related work~\cite{zuo2025ttrl}, we apply GRPO on each benchmark to implement ECHO.
Specifically, during the rollout stage, for each problem prompt, the policy model first generates 64 candidate responses,
which are used for voting-based pseudo-label construction. We then downsample 32 responses per prompt from these candidates
for training updates. During generation, we set the maximum response length to 3{,}072 tokens to balance reasoning completeness
and computational cost. The general hyperparameters are summarized in Table~\ref{tab:hyperparameters}. The hyperparameters and configurations specific to our ECHO method are provided in Table~\ref{tab:key_hyperparameters}.
\begin{table}[ht]
\caption{Overall hyperparameters for label-free training, following the TTRL setup.}
\centering
\setlength{\tabcolsep}{12pt}
\begin{tabular}{lc}
\toprule
Hyperparameter & Value \\
\midrule
Train Batch Size           & 5     \\
PPO Mini-Batch Size        & 1     \\
PPO Micro-Batch Size       & 1     \\
Rollouts for Majority Vote & 64    \\
Rollouts Used for Training & 32    \\
Validation Temperature     & 0.6   \\
Learning Rate              & 5e-7  \\
Use KL Loss                & True  \\
KL Loss Coefficient        & 0.001 \\
\bottomrule
\end{tabular}
\label{tab:hyperparameters}
\end{table}

\begin{table}[ht]
\caption{Key hyperparameter settings for ECHO.}
\centering
\setlength{\tabcolsep}{14pt}
\begin{tabular}{lc}
\toprule
Key Hyperparameter & Value \\
\midrule
Entropy high $H_{\text{high}}$ & 3.5 \\
Entropy low $H_{\text{low}}$ & 1.0 \\
Branch threshold $s_{\text{branch}}$ & 1.2 \\
Min branches $B_{\min}$ & 1 \\
Max branches $B_{\max}$ & 4 \\
Prune threshold $\tau_{\text{prune}}$ & 0.4 \\
Tail window $W_T$ & 8 \\
Group entropy window $W_{H}$ & 4 \\
Tail confidence threshold $\tau_{\text{tail}}$ & 1 \\
Consecutive length $S_{\Delta}$ & 3 \\
Patience length $S_{tail}$ & 3\\
Tail-confidence threshold $\tau_{tail}$ & 1 \\
Spike threshold $\delta_{upper}$ & 0.5 \\
\bottomrule
\end{tabular}
\label{tab:key_hyperparameters}
\end{table}

\begin{table}[t]
\centering
\setlength{\tabcolsep}{6pt}
\renewcommand{\arraystretch}{1.15}
\caption{Pass@K under different pruning thresholds $\tau_{\text{prune}}$. We conduct this experiment on AIME2024 dataset without labels.
}
\label{tab:tauprune_vs_passk}
\begin{tabular}{lccccc}
\toprule
$\boldsymbol{\tau_{\text{prune}}}$ & Pass@1 & Pass@2 & Pass@4 & Pass@8 & Pass@16 \\
\midrule
0.2  & 6.7 &10.0 & 10.0 & 13.3 & 16.7 \\	
0.4  & 10 & 10.0 &  13.7 & 16.6 & 30.0 \\
0.8  & 6.7 & 11.6 & 13.7 & 13.3 & 26.7 \\
1.2  & 0.0 & 0.0 & 6.7 & 10.0 & 16.7 \\  
\bottomrule
\end{tabular}
\end{table}

\begin{table}[t]
\centering
\setlength{\tabcolsep}{6pt}
\renewcommand{\arraystretch}{1.15}
\caption{Pass@K under different tail-window sizes $W_T$ (fix $\tau_{\text{prune}}=0.04$). We conduct this experiment
on AMC dataset without labels.}
\label{tab:wt_vs_passk}
\begin{tabular}{lccccc}
\toprule
$\boldsymbol{W_T}$ & Pass@1 & Pass@2 & Pass@4 & Pass@8 & Pass@16 \\
\midrule
2  & 20.5 & 21.7 & 55.4 & 55.4 & 60.2 \\
4  &  22.9 & 27.7 & 61.4 & 61.4 & 66.3  \\				
8  &  22.9 & 28.9 & 62.7  & 63.9 & 75.9 \\
10  & 28.9 & 28.9 & 63.9 & 64.1 & 75.9 \\
\bottomrule
\end{tabular}
\end{table}

\section{Additional Results and Analysis}

\subsection{When Might ECHO Fail?}
\label{ap: when_fails}
\paragraph{Inappropriate pruning hyperparameters.}
Tree-structured rollouts are highly sensitive to pruning-related hyperparameters. When a problem requires a long intermediate derivation phase with \emph{low confidence}, overly relying on short-window confidence statistics can \textbf{prematurely terminate} trajectories that would otherwise reach the correct answer. This issue is particularly severe under delayed outcome signals, where good and bad trajectories are mainly distinguishable only at the final answer, thereby limiting the model performance.
The results in Tables~\ref{tab:tauprune_vs_passk} and Table~\ref{tab:wt_vs_passk} provide direct evidence. For AIME2024 (Table~\ref{tab:tauprune_vs_passk}), when the pruning threshold is increased from $\tau_{\text{prune}}=0.4$ to a more aggressive setting of $1.2$, \textbf{both Pass@1 and Pass@2 drop to 0}, and Pass@16 decreases from \textbf{30.0} to \textbf{16.7}, indicating that many potentially correct trajectories are pruned before reaching the end. Meanwhile, overly small or improper thresholds can also hurt stability: for example, at $\tau_{\text{prune}}=0.2$, Pass@16 is only \textbf{16.7}, substantially lower than the \textbf{30.0} achieved at $\tau_{\text{prune}}=0.4$, suggesting that a careful balance between pruning strength and noise tolerance is required.
Similarly, on AMC (Table~8), the tail-window size $W_T$ has a significant impact. Increasing $W_T$ from $2$ to $8$ improves Pass@16 from \textbf{60.2} to \textbf{75.9}, and Pass@8 from \textbf{55.4} to \textbf{63.9}. This indicates that a longer window better smooths confidence fluctuations during intermediate reasoning, reducing \emph{false pruning} caused by transient confidence drops and increasing the probability of reaching the correct final answer. In contrast, overly short windows are more susceptible to local noise, making pruning decisions unstable and degrading Pass@K.
Overall, $\tau_{\text{prune}}$ (pruning aggressiveness) and $W_T$ (confidence smoothing scale) are the two key hyperparameters governing this failure mode: \textbf{overly aggressive thresholds or overly short windows amplify false pruning, thereby weakening or even offsetting the benefits of our method}.

\paragraph{Lack of Prior Knowledge on the Target Task.}
As a label-free, self-bootstrapped optimization framework, ECHO derives its training signal from majority voting over the model's own rollouts.
This implicitly requires that the backbone model can produce a non-trivial fraction of correct candidates on the target task, so that the voted pseudo-labels are positively correlated with true correctness.
Otherwise, majority voting is more likely to converge to an incorrect consensus, pushing updates toward noisy signals and yielding diminished gains or even negative transfer.
This issue becomes particularly salient when the training data distribution is mismatched with the target benchmarks.
Tables~\ref{tab:reclor_transfer} and Table~\ref{tab:thinklite_vl} illustrate this failure mode.
For training on ReClor in Table~\ref{tab:reclor_transfer}, ReClor mainly consists of reading comprehension and logical multiple-choice questions, whose signals emphasize linguistic logic and discriminative reasoning.
When transferring to AIME/AMC/MATH-500/GPQA, which demand competition-level mathematical reasoning and graduate-level science QA, the transferable math/science priors are limited.
With the weaker Qwen2.5-7B-Base backbone, our method yields only marginal improvements and can even degrade performance (e.g., MATH-500 drops from $83.0$ to $81.9$), suggesting that the voted pseudo-labels are dominated by incorrect trajectories and the updates fail to align with the target capabilities.
In contrast, the stronger Qwen3-8B-Base backbone provides richer task-relevant priors and thus more correct candidates in rollouts, leading to more stable learning signals and more reliable overall behavior.

A similar phenomenon appears in the multimodal setting in Table~\ref{tab:thinklite_vl}.
ThinkLite-11K can be viewed as a dataset biased toward  lightweight reasoning alignment, which differs substantially from visual-math benchmarks such as MathVision, MathVerse, MathVista, and LogicVista in problem structure, required knowledge, and long-horizon reasoning patterns.
Under such distribution mismatch, even with a stronger Qwen3-VL-8B-Instruct backbone, ECHO may still incur drops on certain datasets (e.g., MathVision and MathVista), indicating that when key priors (e.g., geometry, chart/table understanding, and multi-step derivations) are missing, pseudo-labels from voting become less reliable and negative transfer becomes more likely.

In summary, both ReClor and ThinkLite-11K deviate notably from the target evaluation benchmarks in task format and required skill dimensions.
When the backbone lacks task-specific priors, the vote-based pseudo-label to self-bootstrapped update pipeline of ECHO can be dominated by systematic noise, resulting in limited gains or even failures.


\begin{table}[t]
\centering
\caption{Performance comparison with different base models and training datasets (on ReClor). Each cell reports pass@16 accuracy.}
\begin{tabular}{l c c c c c}
\toprule
Models & Training dataset & AIME2024 & AMC & MATH-500 & GPQA \\
\midrule
\textit{Qwen2.5-7B-Base} & - & 20.0 & 68.7 & 83.0 & 40.4 \\
\quad TTRL      & \multirow{2}{*}{ReClor} & 16.7 & 69.7 & 80.7 & 41.4 \\
\quad Ours      &  & 20.0 & 71.1 & 81.9 & 41.9 \\
\hdashline
\textit{Qwen3-8B-Base}   & - & 40.0 & 80.7 & 91.6 & 70.2 \\
\quad TTRL      & \multirow{2}{*}{ReClor} & 36.7 & 78.1 & 90.5 & 72.7 \\
\quad Ours      &  & 40.0 & 79.5 & 89.2 & 69.2 \\
\bottomrule
\label{tab:reclor_transfer}
\end{tabular} 
\end{table}

\begin{table}[t]
\centering
\caption{Performance comparison with different base models and training datasets (on ThinkLite-11k). Each cell reports pass@1 accuracy.}
\begin{tabular}{l c c c c c}
\toprule
Models & Training dataset & Math Vision & Math Verse & Math Vista & LogicVista \\
\midrule
\textit{Qwen2.5-VL-7B} &  & 24.9 & 43.8 & 66.3 & 26.0 \\
\quad MM-UPT            & \multirow{2}{*}{ThinkLite-11K} & 22.6 & 38.5 & 60.7 & 22.7 \\
\quad OURS              &  & 24.6 & 44.8 & 67.3 & 25.3 \\
\hdashline
\textit{Qwen3-VL-8B}    &  & 62.7 & 62.1 & 77.2 & 22.7 \\
\quad MM-UPT            & \multirow{2}{*}{ThinkLite-11K} & 60.7 & 60.5 & 75.6 & 19.8 \\
\quad OURS              & & 62.0 & 63.1 & 72.3 & 22.5 \\
\bottomrule
\label{tab:thinklite_vl}
\end{tabular}
\end{table}

\begin{table}[t]
\centering
\caption{Performance comparison on four benchmarks.}
\begin{tabular}{lcccc}
\toprule
Models & AIME2024 & AMC & MATH-500 & GPQA \\
\midrule
\textit{Qwen3-8B-Base} & 39.4 & 77.6 & 91.5 & 70.0 \\
GRPO          & 55.2 & 91.2 & 93.4 & 82.1 \\
Ours          & 53.3 & 90.4 & 95.4 & 81.3 \\
\bottomrule
\label{tab:models_results}
\end{tabular}
\end{table}

\begin{table}[t]
\centering
\caption{Performance comparison on four vision-reasoning benchmarks.}
\label{tab:vl_results}
\begin{tabular}{lcccc}
\toprule
Models & Math Vision & Math Verse & Math Vista & LogicVista \\
\midrule
\textit{Qwen2.5-VL-7B}  & 24.9  & 43.8 & 66.3 & 26.0 \\
GRPO                    & 28.3 & 46.4 & 69.3 & 30.0 \\
Ours                    & 28.1  & 44.5 & 69.0 & 27.9 \\
\bottomrule
\label{tab:vision-reasoning}
\end{tabular}
\end{table}

\subsection{Comparing label-free ECHO with supervised GRPO (RLVR)}
\label{ap: compare_GRPO}
We compare ECHO, a label-free test-time reinforcement learning method, against supervised GRPO in the RLVR
setting. Since GRPO optimizes with explicit supervised signals while ECHO relies only on self-generated rollouts,
this study provides a head-to-head comparison between supervised and label-free optimization under comparable
inference-time budgets.

\textbf{Text-only reasoning.}
As shown in Table~\ref{tab:models_results}, on the Qwen3-8B-Base backbone, ECHO is close to supervised GRPO on
most benchmarks: it is lower by 1.9 on AIME2024, lower by 0.8 on AMC, higher by 2.0 on MATH-500, and lower by 0.8
on GPQA. These results indicate that, without external labels, ECHO can match supervised RLVR on three benchmarks
within one point and even outperform it on MATH-500.

\textbf{Vision-based reasoning.}
Table~\ref{tab:vision-reasoning} reports multimodal results on Qwen2.5-VL-7B. Compared with supervised GRPO,
ECHO is lower by 0.2 on MathVision and lower by 0.3 on MathVista, while the gap is larger on MathVerse and
LogicVista (lower by 1.9 and 2.1, respectively). This pattern suggests that label-free optimization remains more
challenging on vision tasks that require stronger visual grounding and more reliable credit assignment.

\textbf{Summary.}
Overall, Tables~\ref{tab:models_results} and \ref{tab:vision-reasoning} show that ECHO can be competitive with
supervised GRPO, matching it closely on several benchmarks and surpassing it on MATH-500, while the remaining
differences are more pronounced on vision-heavy tasks.

\subsection{Performance under varying Pass@K}
\label{ap: vary_pass@k}
We further evaluate how model performance changes as the sampling budget increases by reporting results under different Pass@K settings. Figure~\ref{fig:diff_passn_mlp} and Figure~\ref{fig:diff_passn_vision} summarize the trends on both text-only and vision-based reasoning benchmarks, which helps assess the stability of inference-time search and the robustness of cross-dataset generalization.

\textbf{Text-only reasoning benchmarks.}
In Figure~\ref{fig:diff_passn_mlp}, we train Qwen2.5-7B-Base on AIME2024 and evaluate it on AIME2024, AMC, GPQA, and MATH-500. Across all datasets, accuracy consistently increases with larger $K$, indicating that additional rollouts improve the chance of discovering a correct solution for hard reasoning problems. The gains are most pronounced when increasing $K$ from 1 to 4, and then gradually saturate as $K$ becomes larger. Overall, our method maintains strong performance across the full range of $K$ and typically achieves higher accuracy at medium and large $K$, suggesting that it benefits more effectively from increased sampling budgets and yields more stable improvements when additional candidates are available.

\textbf{Vision-based reasoning benchmarks.}
Figure~\ref{fig:diff_passn_vision} reports the same analysis for Qwen2.5-VL-7B trained on Geometry3k and evaluated on LogicVista, MathVerse, MathVision, and MathVista. Similar to the text-only setting, performance improves monotonically with $K$ on all four benchmarks, showing that multi-sample decoding is also beneficial for visual reasoning. The performance gains are again concentrated in the low-$K$ regime, while improvements become smaller at higher $K$. Our method remains competitive across datasets and demonstrates stronger utilization of increased sampling budgets, indicating that it can better convert additional rollouts into effective reasoning improvements under cross-dataset transfer.

\begin{figure}[!t]
    \centering
    \begin{subfigure}{0.23\textwidth}
        \centering
        \includegraphics[width=\textwidth]{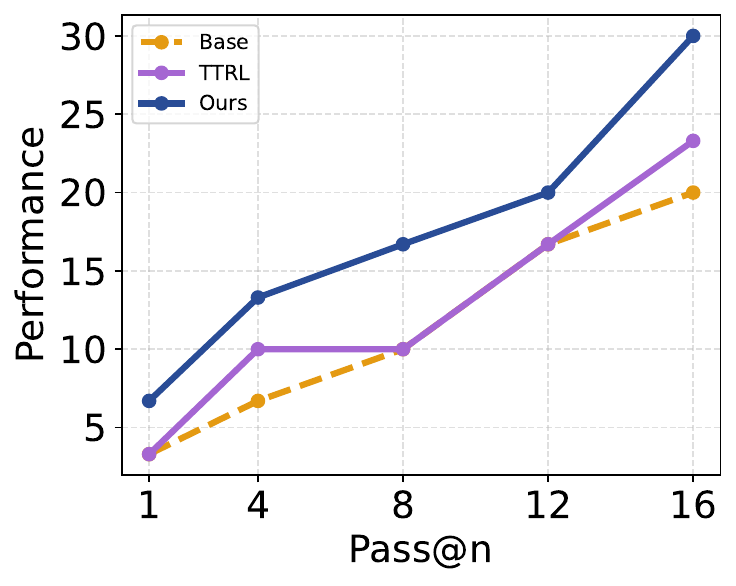}
        \caption{\quad (a) AIME2024}
    \end{subfigure}
    \begin{subfigure}{0.23\textwidth}
        \centering
        \includegraphics[width=\textwidth]{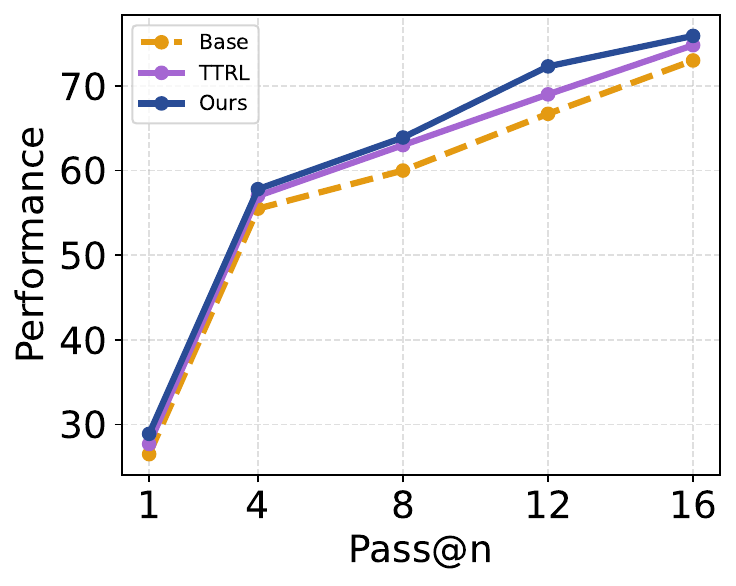}
        \caption{\quad (b) AMC}
    \end{subfigure}
    \begin{subfigure}{0.23\textwidth}
        \centering
        \includegraphics[width=\textwidth]{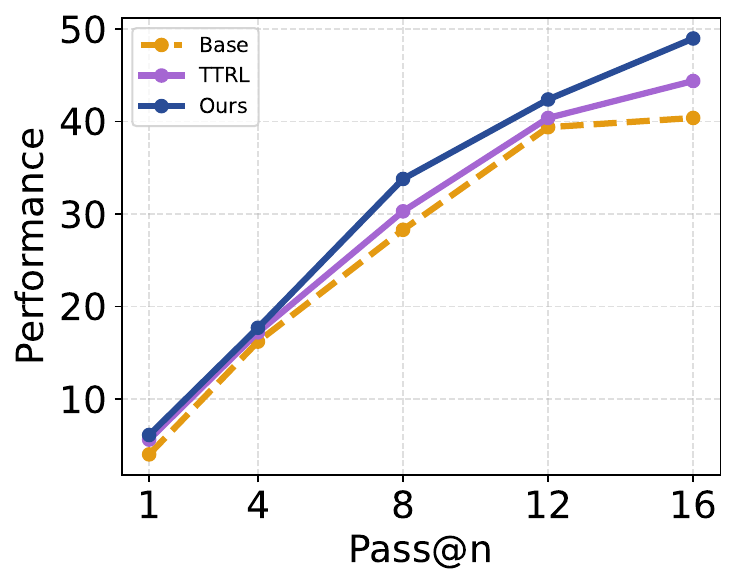}
        \caption{\quad (c) GPQA}
    \end{subfigure}
    \begin{subfigure}{0.23\textwidth}
        \centering
        \includegraphics[width=\textwidth]{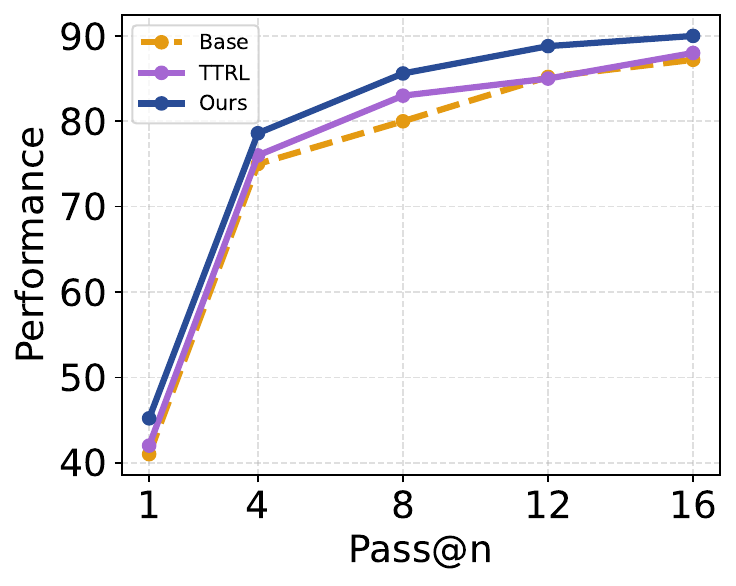}
        \caption{\quad (d) MATH-500}
    \end{subfigure}
    \caption{We train Qwen2.5-7B-Base on AIME2024 and evaluate it on other benchmarks, reporting performance under different Pass@K settings to assess cross-dataset generalization and reasoning stability.}
    \label{fig:diff_passn_mlp}
\end{figure}

\begin{figure}[!t]
    \centering
    \begin{subfigure}[b]{0.23\textwidth}
        \centering
        \includegraphics[width=\textwidth]{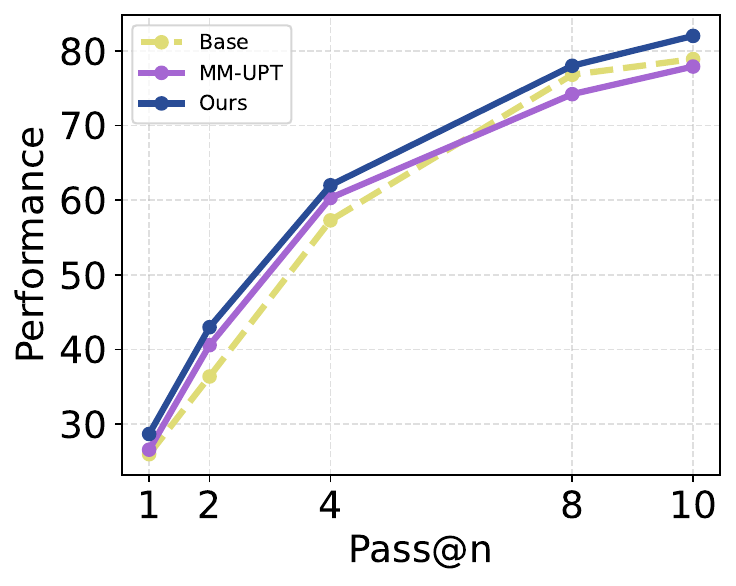}
        \caption{\quad (a) LogicVista}
    \end{subfigure}
    \begin{subfigure}[b]{0.23\textwidth}
        \centering
        \includegraphics[width=\textwidth]{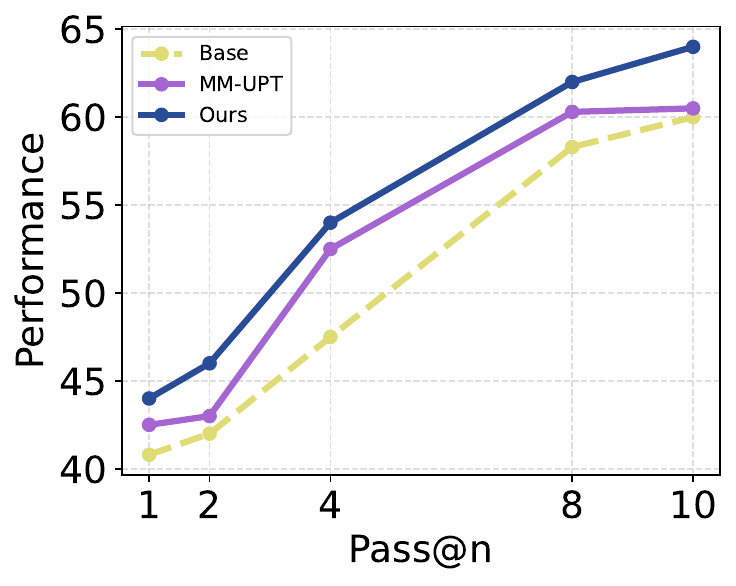}
        \caption{\quad (b) Math\_Verse}
    \end{subfigure}
    \begin{subfigure}[b]{0.23\textwidth}
        \centering
        \includegraphics[width=\textwidth]{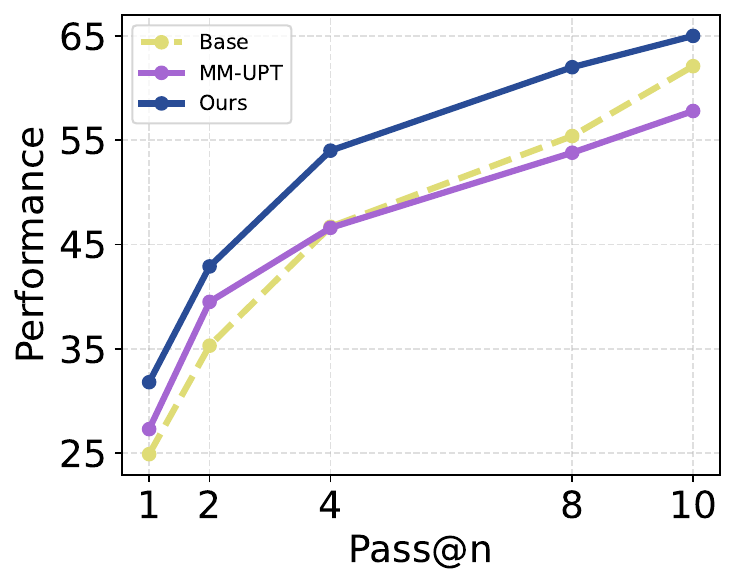}
        \caption{\quad (c) Math\_Vision}
    \end{subfigure}
    \begin{subfigure}[b]{0.23\textwidth}
        \centering
        \includegraphics[width=\textwidth]{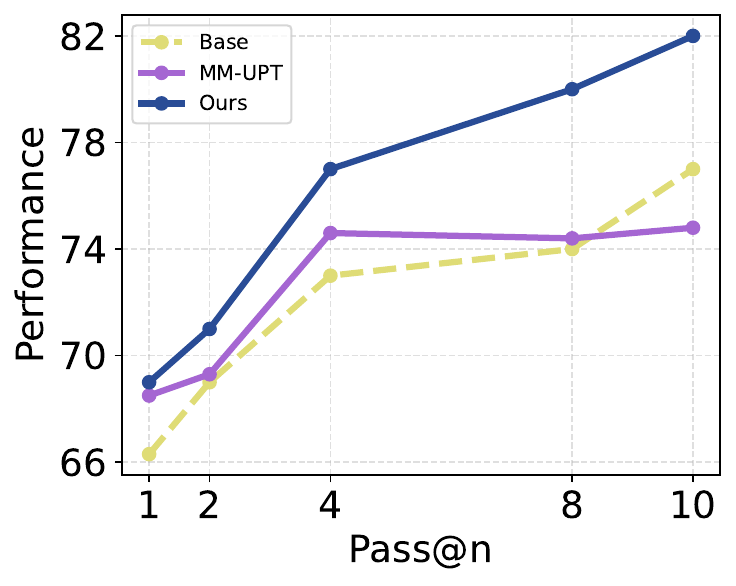}
        \caption{\quad (d) Math\_Vista}
    \end{subfigure}
    \caption{We train Qwen2.5-VL-7B on Geometry3k and evaluate it on other benchmarks, reporting performance under different Pass@K settings to assess cross-dataset generalization and reasoning stability.}
    \label{fig:diff_passn_vision}
\end{figure}

\section{Detailed theoretical derivations and explanations}
\label{ap: proof}
\subsection{Notation and Objective}
For each prompt $x_i$, we sample a trajectory $o_i=(y_{i,1},\ldots,y_{i,|o_i|})$. 
Define the per-token importance ratio
\begin{equation}
r_{i,t}(\theta)
=
\frac{\pi_{\theta}\!\left(y_{i,t}\mid x_i, y_{i,<t}\right)}
{\pi_{\mathrm{ref}}\!\left(y_{i,t}\mid x_i, y_{i,<t}\right)},
\qquad
\epsilon_i=\epsilon(o_i).
\end{equation}
Let $A^{\mathrm{hyb}}_{i,t}$ denote the hybrid advantage defined in the main text. 
During policy optimization, we treat $A^{\mathrm{hyb}}_{i,t}$ and $\epsilon_i$ as fixed quantities (i.e., detached from backpropagation). 
The per-token objective of ECHO is
\begin{equation}
J_{i,t}(\theta)
=
\min\!\Big(
r_{i,t}(\theta)\,A^{\mathrm{hyb}}_{i,t},\;
\mathrm{clip}\!\big(r_{i,t}(\theta),1-\epsilon_i,1+\epsilon_i\big)\,A^{\mathrm{hyb}}_{i,t}
\Big)
-\beta_{\mathrm{KL}}\,D^{\mathrm{KL}}_{i,t}(\theta),
\end{equation}
where
\begin{equation}
D^{\mathrm{KL}}_{i,t}(\theta)
=
D_{\mathrm{KL}}\!\Big(
\pi_{\theta}(\cdot\mid x_i,y_{i,<t})
\,\big\|\,
\pi_{\mathrm{ref}}(\cdot\mid x_i,y_{i,<t})
\Big).
\end{equation}

\subsection{Piecewise Form and Clipping Gate for $\min(\cdot)$}
Let $A=A^{\mathrm{hyb}}_{i,t}$, $r=r_{i,t}(\theta)$, $\epsilon=\epsilon_i$, and $\bar r=\mathrm{clip}(r,1-\epsilon,1+\epsilon)$. 
Define
\begin{equation}
s(r)=\min(rA,\bar r A).
\end{equation}

\begin{lemma}[Piecewise surrogate]
\label{lem:piecewise_surrogate}
The function $s(r)$ admits the following piecewise form:
\begin{equation}
s(r)=
\begin{cases}
(1+\epsilon)A, & A>0,\ r>1+\epsilon,\\
(1-\epsilon)A, & A<0,\ r<1-\epsilon,\\
rA, & \text{otherwise}.
\end{cases}
\end{equation}
\end{lemma}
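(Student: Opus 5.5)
The plan is a direct case analysis on the sign of $A$, combined with the three regimes of the clipping map $\bar r=\mathrm{clip}(r,1-\epsilon,1+\epsilon)$. Throughout I will use that $\epsilon>0$: this holds because $\epsilon(o_i)\in[\epsilon_{\min},\epsilon_{\max}]$ by Eq.~\eqref{eq: adaptive_clip}, given positive bounds. The basic observation is that for any reals $u,v$, $\min(uA,vA)=A\min(u,v)$ when $A\ge 0$ and $A\max(u,v)$ when $A<0$. So $s(r)$ reduces to comparing $r$ with $\bar r$.

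\textbf{Step 1: compare $r$ and $\bar r$.} By definition of clipping, $\bar r=r$ when $1-\epsilon\le r\le 1+\epsilon$. When $r>1+\epsilon$ we have $\bar r=1+\epsilon<r$, and when $r<1-\epsilon$ we have $\bar r=1-\epsilon>r$. Hence $\min(r,\bar r)$ equals $1+\epsilon$ if $r>1+\epsilon$ and equals $r$ otherwise. Likewise $\max(r,\bar r)$ equals $1-\epsilon$ if $r<1-\epsilon$ and equals $r$ otherwise.

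\textbf{Step 2: split on the sign of $A$.}
\begin{itemize}
\item If $A>0$, then $s(r)=A\min(r,\bar r)$. This gives $(1+\epsilon)A$ when $r>1+\epsilon$ and $rA$ otherwise.
\item If $A<0$, then $s(r)=A\max(r,\bar r)$. This gives $(1-\epsilon)A$ when $r<1-\epsilon$ and $rA$ otherwise.
\item If $A=0$, both arguments of the min vanish, so $s(r)=0=rA$. This falls under the ``otherwise'' branch.
\end{itemize}

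\textbf{Step 3: check the cases are exhaustive and disjoint.} The two special conditions $\{A>0,\ r>1+\epsilon\}$ and $\{A<0,\ r<1-\epsilon\}$ are disjoint because they require opposite signs of $A$. Every remaining pair $(A,r)$ was shown in Step 2 to yield $rA$, which is the third line of the stated formula.

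There is no real obstacle here. The only point needing care is the sign flip that turns $\min$ into $\max$ when $A<0$, which is why the clipped branch for negative advantages sits at the lower boundary $1-\epsilon$ rather than $1+\epsilon$.
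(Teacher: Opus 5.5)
Your proof is correct and follows the same route as the paper, which splits on the sign of $A$ and checks which argument of the $\min$ is selected. Your version spells this out via $\min(uA,vA)=A\min(u,v)$ for $A\ge 0$ and $A\max(u,v)$ for $A<0$, and it also treats explicitly the case $A=0$ and the requirement $\epsilon\ge 0$, both of which the paper leaves implicit.
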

\begin{proof}
The statement follows by separately considering the cases $A>0$ and $A<0$ and determining which branch is selected by $\min(\cdot)$.
\end{proof}

\subsection{Gradient of the Importance Ratio}
\begin{lemma}[Ratio gradient]
\label{lem:ratio_grad}
Since $\pi_{\mathrm{ref}}$ is fixed, the ratio gradient satisfies
\begin{equation}
\nabla_{\theta} r_{i,t}(\theta)
=
r_{i,t}(\theta)\,
\nabla_{\theta}\log \pi_{\theta}\!\left(y_{i,t}\mid x_i,y_{i,<t}\right).
\end{equation}
\end{lemma}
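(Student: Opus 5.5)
The plan is to prove the identity directly from the definition of $r_{i,t}(\theta)$ via the log-derivative trick. Write $p_\theta := \pi_{\theta}(y_{i,t}\mid x_i,y_{i,<t})$ and $q := \pi_{\mathrm{ref}}(y_{i,t}\mid x_i,y_{i,<t})$. Then $r_{i,t}(\theta)=p_\theta/q$. Here $q$ is a constant with respect to $\theta$, because $\pi_{\mathrm{ref}}$ is fixed. The sampled tokens $y_{i,t}$ and the prefix $y_{i,<t}$ are also held fixed, since they are data drawn before the update and are not differentiated through.

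First, differentiate the quotient. Since $q$ does not depend on $\theta$, we get $\nabla_\theta r_{i,t}(\theta) = \nabla_\theta p_\theta / q$.

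Second, apply the identity $\nabla_\theta p_\theta = p_\theta \nabla_\theta \log p_\theta$. This holds whenever $p_\theta>0$, which is automatic for a softmax-parameterized policy. Substituting gives
\[
\nabla_\theta r_{i,t}(\theta) = \frac{p_\theta}{q}\,\nabla_\theta \log p_\theta = r_{i,t}(\theta)\,\nabla_\theta \log \pi_{\theta}\!\left(y_{i,t}\mid x_i,y_{i,<t}\right).
\]

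There is no substantive obstacle. The only points to state explicitly are the positivity of $p_\theta$ and $q$, which keeps the ratio and the logarithm well defined, and differentiability of $\pi_\theta$ in $\theta$. I would note that a softmax output layer guarantees both.

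I would also remark how this identity is used afterwards. Combined with Lemma~\ref{lem:piecewise_surrogate}, it yields the gated gradient in Eq.~\eqref{eq:echo_grad_gate}. In the clipped branches, $s(r)$ is constant in $\theta$, so its gradient vanishes. In the unclipped branch, $\nabla_\theta (rA) = A\,r\,\nabla_\theta\log\pi_\theta$, because $A^{\mathrm{hyb}}_{i,t}$ is detached.
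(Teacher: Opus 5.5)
Your proof is correct and matches the paper's argument: hold $\pi_{\mathrm{ref}}$ fixed, differentiate the quotient, and substitute $\nabla_{\theta}\pi_{\theta}=\pi_{\theta}\nabla_{\theta}\log\pi_{\theta}$. Your explicit conditions (positivity and differentiability of $\pi_{\theta}$, both guaranteed by a softmax parameterization) are a useful addition that the paper leaves implicit.
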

\begin{proof}
Let $r=\pi_{\theta}(y)/\pi_{\mathrm{ref}}(y)$. Differentiating w.r.t.\ $\theta$ and using $\nabla_{\theta}\pi_{\theta}=\pi_{\theta}\nabla_{\theta}\log \pi_{\theta}$ yields the claim.
\end{proof}

\subsection{Closed-form Token Gradient}
Combining Lemma~\ref{lem:piecewise_surrogate} and Lemma~\ref{lem:ratio_grad}, and treating $A$ and $\epsilon$ as detached during optimization, we obtain the following result.

\begin{proposition}[Clipped policy gradient]
\label{prop:clipped_pg}
The gradient of the clipped surrogate satisfies
\begin{equation}
\nabla_{\theta}\, s\!\big(r_{i,t}(\theta)\big)
=
F^{\mathrm{clip}}_{i,t}(\theta)\;
A^{\mathrm{hyb}}_{i,t}\;
r_{i,t}(\theta)\;
\nabla_{\theta}\log \pi_{\theta}\!\left(y_{i,t}\mid x_i,y_{i,<t}\right),
\end{equation}
where the clipping gate $F^{\mathrm{clip}}_{i,t}(\theta)$ is given by
\begin{equation}
F^{\mathrm{clip}}_{i,t}(\theta)=
\begin{cases}
0, & A^{\mathrm{hyb}}_{i,t}>0,\ r_{i,t}(\theta)>1+\epsilon(o_i),\\
0, & A^{\mathrm{hyb}}_{i,t}<0,\ r_{i,t}(\theta)<1-\epsilon(o_i),\\
1, & \text{otherwise}.
\end{cases}
\end{equation}
\end{proposition}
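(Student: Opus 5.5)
The plan is to combine Lemma~\ref{lem:piecewise_surrogate} with Lemma~\ref{lem:ratio_grad} via the chain rule, treating $A=A^{\mathrm{hyb}}_{i,t}$ and $\epsilon=\epsilon(o_i)$ as constants (detached). Since $s$ depends on $\theta$ only through the scalar $r=r_{i,t}(\theta)$, wherever $s$ is differentiable in $r$ we have $\nabla_\theta s(r_{i,t}(\theta)) = s'(r)\,\nabla_\theta r_{i,t}(\theta)$. So the work reduces to computing the scalar derivative $s'(r)$ region by region, then substituting the ratio gradient.

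\textbf{Step 1 (scalar derivative).} I will go through the three regions of Lemma~\ref{lem:piecewise_surrogate}. In the region $A>0,\ r>1+\epsilon$, $s(r)=(1+\epsilon)A$ is constant in $r$, so $s'(r)=0$. In the region $A<0,\ r<1-\epsilon$, $s(r)=(1-\epsilon)A$ is likewise constant, so again $s'(r)=0$. In the remaining region $s(r)=rA$, so $s'(r)=A$. Together these give $s'(r)=F^{\mathrm{clip}}_{i,t}\,A$ with the gate exactly as stated. The degenerate case $A=0$ falls under ``otherwise''; there both sides vanish, so it is harmless.

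\textbf{Step 2 (substitution).} Multiplying $s'(r)$ by $\nabla_\theta r_{i,t}=r_{i,t}\nabla_\theta\log\pi_\theta(y_{i,t}\mid x_i,y_{i,<t})$ from Lemma~\ref{lem:ratio_grad} yields the claimed formula directly.

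\textbf{Main obstacle.} The only subtle point is the set of measure-zero kink points $r=1\pm\epsilon$, where $s$ is not differentiable. At those points, the boundary values fall into the ``otherwise'' branch, since the strict inequalities fail. I would handle them by stating that the identity holds wherever $s$ is differentiable, and that at the kinks the formula returns the one-sided (sub)gradient $A\,\nabla_\theta r$. This matches what automatic differentiation of $\min$ and $\mathrm{clip}$ produces: at ties, autograd selects the unclipped branch. This is a convention rather than a genuine difficulty, so I will note it explicitly and not expand on it further.
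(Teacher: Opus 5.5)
Your proposal is correct and follows the same route as the paper: in the saturated regions of Lemma~\ref{lem:piecewise_surrogate}, $s$ is constant and so has zero gradient, and in the remaining region $s=rA$, where Lemma~\ref{lem:ratio_grad} gives $A\,r_{i,t}(\theta)\,\nabla_\theta\log\pi_\theta(y_{i,t}\mid x_i,y_{i,<t})$. Your explicit handling of the kinks at $r=1\pm\epsilon$ and of the case $A=0$ is a small refinement that the paper's proof leaves implicit, but it does not change the argument.
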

\begin{proof}
In the saturated regions, $s(r)$ is constant and hence has zero gradient. In the unsaturated region, $s(r)=rA$, and the claim follows from Lemma~\ref{lem:ratio_grad}.
\end{proof}

\subsection{Gradient of the KL Regularizer}
Let $h_{i,t}=(x_i,y_{i,<t})$. The KL divergence can be written as
\begin{equation}
D^{\mathrm{KL}}_{i,t}(\theta)
=
\sum_{v}\pi_{\theta}(v\mid h_{i,t})\,
\log\frac{\pi_{\theta}(v\mid h_{i,t})}{\pi_{\mathrm{ref}}(v\mid h_{i,t})}.
\end{equation}

\begin{lemma}[KL gradient]
\label{lem:kl_grad}
The KL gradient admits the following expectation form:
\begin{equation}
\nabla_{\theta} D^{\mathrm{KL}}_{i,t}(\theta)
=
\mathbb{E}_{v\sim\pi_{\theta}(\cdot\mid h_{i,t})}
\Bigg[
\Bigg(\log\frac{\pi_{\theta}(v\mid h_{i,t})}{\pi_{\mathrm{ref}}(v\mid h_{i,t})}+1\Bigg)\,
\nabla_{\theta}\log \pi_{\theta}(v\mid h_{i,t})
\Bigg].
\end{equation}
\end{lemma}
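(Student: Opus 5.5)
We differentiate the finite sum defining $D^{\mathrm{KL}}_{i,t}(\theta)$ term by term. The sum is over a finite vocabulary, and $\pi_{\mathrm{ref}}$ does not depend on $\theta$, so interchanging $\nabla_\theta$ and $\sum_v$ is immediate. Abbreviate $p_v=\pi_{\theta}(v\mid h_{i,t})$ and $q_v=\pi_{\mathrm{ref}}(v\mid h_{i,t})$, so that $D^{\mathrm{KL}}_{i,t}=\sum_v p_v(\log p_v-\log q_v)$. We assume $p_v>0$ for all $v$, which holds for softmax policies, so every logarithm is well defined and differentiable.

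By the product rule,
\begin{equation*}
\nabla_\theta\bigl[p_v(\log p_v-\log q_v)\bigr]=(\nabla_\theta p_v)\Bigl(\log\frac{p_v}{q_v}\Bigr)+p_v\,\nabla_\theta\log p_v .
\end{equation*}
We then apply the log-derivative identity $\nabla_\theta p_v=p_v\nabla_\theta\log p_v$, which is the same identity used in the proof of Lemma~\ref{lem:ratio_grad}. This turns the first term into $p_v\log(p_v/q_v)\nabla_\theta\log p_v$. Factoring out $p_v\nabla_\theta\log p_v$ and summing over $v$ gives
\begin{equation*}
\nabla_\theta D^{\mathrm{KL}}_{i,t}=\sum_v p_v\Bigl(\log\frac{p_v}{q_v}+1\Bigr)\nabla_\theta\log p_v .
\end{equation*}
Reading $\sum_v p_v(\cdot)$ as an expectation under $v\sim\pi_\theta(\cdot\mid h_{i,t})$ yields exactly the stated formula.

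The only delicate point is the bookkeeping of the ``$+1$'' term, which arises from differentiating $\log p_v$ inside the product. It is worth noting, though this is not needed for the lemma, that $\sum_v p_v\nabla_\theta\log p_v=\nabla_\theta\sum_v p_v=\nabla_\theta 1=0$. Hence the $+1$ contributes zero in expectation and could be dropped; it is retained here to match the statement. No other obstacle is expected: everything is a finite-dimensional smooth computation.
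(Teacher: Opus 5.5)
Your proof is correct and takes the same route as the paper: differentiate the explicit finite sum term by term, substitute $\nabla_\theta \pi_\theta = \pi_\theta \nabla_\theta \log \pi_\theta$, and regroup the result as an expectation under $\pi_\theta(\cdot\mid h_{i,t})$. Your product-rule bookkeeping and your remark that the $+1$ term has zero mean fill in details that the paper's one-line proof leaves implicit.
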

\begin{proof}
Differentiate the explicit summation form of $D^{\mathrm{KL}}_{i,t}(\theta)$ and substitute $\nabla_{\theta}\pi_{\theta}=\pi_{\theta}\nabla_{\theta}\log\pi_{\theta}$.
\end{proof}

Consequently, the per-token gradient of the ECHO objective is
\begin{equation}
\nabla_{\theta} J_{i,t}(\theta)
=
F^{\mathrm{clip}}_{i,t}(\theta)\,
A^{\mathrm{hyb}}_{i,t}\,
r_{i,t}(\theta)\,
\nabla_{\theta}\log \pi_{\theta}\!\left(y_{i,t}\mid h_{i,t}\right)
-\beta_{\mathrm{KL}}\,
\nabla_{\theta} D^{\mathrm{KL}}_{i,t}(\theta).
\end{equation}

\subsection{Implications for Mitigating the Two Challenges via Gradient Allocation}
Define the token-wise gradient magnitude (ignoring the KL term) as
\begin{equation}
\mathcal{G}_{i,t}
\triangleq
\Big\|\nabla_{\theta}\, s\!\big(r_{i,t}(\theta)\big)\Big\|
=
F^{\mathrm{clip}}_{i,t}(\theta)\cdot
\big|A^{\mathrm{hyb}}_{i,t}\big|\cdot
r_{i,t}(\theta)\cdot
\Big\|\nabla_{\theta}\log \pi_{\theta}\!\left(y_{i,t}\mid h_{i,t}\right)\Big\|.
\end{equation}

\paragraph{(i) Suppressing early self-reinforcing overfitting.}
By definition of $F^{\mathrm{clip}}_{i,t}(\theta)$, when $r_{i,t}(\theta)$ lies outside the interval $[1-\epsilon(o_i),\,1+\epsilon(o_i)]$ in the direction consistent with the sign of $A^{\mathrm{hyb}}_{i,t}$, the corresponding gradient contribution vanishes due to saturation. 
Since $\epsilon(o_i)$ is adaptively determined by the trajectory-level tail confidence, overconfident trajectories are assigned a smaller clipping radius, which increases the probability of entering the saturated regime and hence reduces their effective contribution to the update. 
Therefore, for trajectories that attain spuriously high rewards under noisy pseudo labels and exhibit excessive confidence, the expected gradient magnitude $\mathbb{E}[\mathcal{G}_{i,t}]$ is more strongly upper-bounded, decreasing their likelihood of dominating optimization. 
Moreover, the KL pull-back term further constrains policy drift, thereby stabilizing training dynamics and preventing premature entropy collapse.

\paragraph{(ii) Mitigating rollout collapse.}
ECHO employs a hybrid advantage of the form
\begin{equation}
A^{\mathrm{hyb}}_{i,t}
=
A^{\mathrm{grp}}_{g,i}\,\bigl(1+aS_{i,t}\bigr),
\end{equation}
with
\begin{equation}
S_{i,t}
=
\alpha\,H_{i,t}
+
\beta\,\bigl(1-C_{i,t}\bigr),
\end{equation}
where $A^{\mathrm{grp}}_{g,i}$ is the within-group trajectory advantage, $H_{i,t}$ is the token entropy at step $t$ on trajectory $o_i$, and $C_{i,t}\in(0,1]$ is the token-level top-$k$ confidence computed from the logits. This construction induces a token-dependent reallocation of gradient magnitude through the multiplicative factor $1+aS_{i,t}$. In particular, tokens with high local uncertainty (large $H_{i,t}$) and low confidence (small $C_{i,t}$) yield larger $S_{i,t}$, and thus receive increased gradient weight, encouraging the policy to allocate more learning capacity to decision-critical ambiguous steps rather than overfitting to nearby high-reward neighbors. Conversely, when a trajectory enters a regime where confidence increases and entropy decreases, $S_{i,t}$ becomes smaller and the corresponding gradient amplification diminishes, preventing the update from being dominated by already-certain tokens. As a result, ECHO shifts gradient mass away from persistently uninformative regions and toward uncertainty-resolving positions, and together with entropy--confidence gating and online pruning on the rollout side, reduces the propensity for rollout collapse.

\begin{algorithm}[t]
\caption{ECHO: Entropy--Confidence Hybrid Optimization for Test-Time RL}
\label{alg:echo}
\begin{algorithmic}[1]
\REQUIRE Prompt $x$; policy $\pi_\theta$; reference policy $\pi_{\text{ref}}$; rollout size $G$; max length $L$.
\REQUIRE Windows $W_G,W_T,W_H$; branching params $\{B_{\min},B_{\max},\alpha_B,\beta_B,H_{\text{low}},H_{\text{high}},s_{\text{branch}}\}$.
\REQUIRE Pruning params $\{\tau_{\text{prune}},S_{\text{tail}},\tau_{\text{tail}},\delta_{\text{upper}},S_\Delta\}$.
\REQUIRE Update params $\{\epsilon_{\min},\epsilon_{\max},\kappa,s_{\text{clip}},\beta_{\text{KL}}\}$; shaping params $\{\alpha,\beta,a\}$.
\ENSURE Updated parameters $\theta$.

\STATE \textbf{Warm-up:} estimate $H_{\text{low}},H_{\text{high}}$.
\STATE \textbf{Tree rollout with online pruning:} start from root prefix; maintain an active branch set $\mathcal{B}$.
\FOR{$t=1$ \;\;TO\;\;$L$}
    \STATE For each active branch, compute entropy $H_{i,t}$ and token confidence $C_{i,t}$ (Eq.~\ref{eq: confidence}).
    \STATE Update smoothed confidence $C_t^{G}$ and $C_{i,t}^{\text{tail}}$ (Eqs.~\ref{eq: grouped_conf}--\ref{eq: tail_conf}); compute entropy increment $\Delta H_t$ (Eq.~\ref{eq: mean_entropy}).
    \STATE Compute branch width $B_t$ by Eq.~(\ref{eq:branch-width}); if $B_t>1$, fork branches using top-$B_t$ tokens.
    \STATE Expand each active branch by one token under $\pi_\theta(\cdot\mid x,y_{i,<t})$.
    \STATE Prune branches using low-confidence, tail-decline, and entropy-spike rules.
    \IF{all active branches reach a complete answer}
     \STATE  \textbf{break} 
    \ENDIF
\ENDFOR
\STATE Collect $G$ completed trajectories $\{o_i\}_{i=1}^{G}$.

\STATE \textbf{Majority-vote reward:} compute $\hat{y}$ and set $R_i=\mathbb{I}(\mathrm{answer}(o_i)=\hat{y})$.
\STATE \textbf{Confidence-adaptive clipping:} compute $C_{\text{tail}}(o_i)$ (Eq.~\ref{eq: traj_tail_conf}) and $\epsilon(o_i)$ by Eq.~(\ref{eq: adaptive_clip}).
\STATE \textbf{Hybrid advantages:} compute $A^{\mathrm{grp}}_{g,i}$ by Eq.~(\ref{eq: grp_adv}) and $A^{\mathrm{hyb}}_{i,t}$ by Eq.~(\ref{eq: hybrid_signal})--(\ref{eq: mult_shaping}).
\STATE \textbf{Policy update:} update $\theta$ by maximizing $\mathcal{L}_{\text{ECHO}}(\theta)$ in Eq.~(\ref{eq: echo_obj}).
\STATE \textbf{return} $\theta$
\end{algorithmic}
\end{algorithm}


\end{document}